\documentclass[12pt]{amsart}
\usepackage{graphicx}
\usepackage{xspace}
\usepackage[margin=1in]{geometry}
\usepackage{amssymb}
\usepackage{amsmath}
\usepackage{amsthm}
\usepackage{thmtools}
\usepackage{color}
\usepackage{xcolor}
\usepackage{hyperref}
\newtheorem{theorem}{Theorem}
\newtheorem{corollary}{Corollary}

\newtheorem{lemma}{Lemma}

\theoremstyle{definition}

\DeclareMathOperator*{\argmin}{\arg\min}
\begin{document}
\title[Divide and Predict]{Divide and Predict: An Architecture for
  Input Space Partitioning and Enhanced Accuracy}
\author{Fenix W. Huang}
\address{Biocomplexity Institute, University of Virginia}
\email{fwh3zc@virginia.edu}
\author{Henning S. Mortveit}
\address{Department of Systems and Information Engineering, University of Virginia}
\email{henning.mortveit@virginia.edu}
\author{Christian M. Reidys}
\address{Department of Mathematics, University of Virginia}
\email{cmr3hk@virginia.edu}
\keywords{Supervised learning, training data stratification, influence, heterogeneity}
\begin{abstract}
In this article the authors develop an intrinsic measure for
quantifying heterogeneity in training data for supervised
learning. This measure is the variance of a random variable which
factors through the influences of pairs of training points.  The
variance is shown to capture data heterogeneity and can thus be used
to assess if a sample is a mixture of distributions.  The authors
prove that the data itself contains key information that supports a
partitioning into blocks.  Several proof of concept studies are
provided that quantify the connection between variance and
heterogeneity for EMNIST image data and synthetic data.  The authors
establish that variance is maximal for equal mixes of distributions,
and detail how variance-based data purification followed by
conventional training over blocks can lead to significant increases in
test accuracy.
\end{abstract}
\maketitle

\section{Introduction}
\label{S:introduction}

The pursuit of advances in machine learning and generative AI has led
to a massive increase in computing requirements.  Complex data
containing multiple distributions has led to more data-hungry and
advanced architectures, running on increasingly larger data centers
with power demands equivalent to those of mid-size cities, have
definitely led to breakthroughs. What about advances in the underlying
theory?

This paper introduces a novel measure that captures
\emph{heterogeneity} of mixed distributions. The measure offers an
intrinsic way to quantify data complexity. Furthermore, we prove that
there exists an algorithm that allows one to ``untangle'' the data.
Our theory thus supports untangling data which in turn enables the use
of simpler architectures that maintain high accuracy, but have a
significantly smaller energy footprint.

Our measure is based on the notion of influence introduced in the
1980s (see, e.g.,~\cite{Cook-Weisberg:1980}) and was motivated by the
high cost of model retraining. Influence is genuinely local in the
sense that it is associated with specific pairs of data points and was
used to quantify how model parameters for local loss contributions
change as we perturb a training point by an infinitesimal amount. This
led to a data-calculus, capable of deriving closed form expressions of
training data perturbations.  The topic was revisited
in~\cite{Koh-Liang:2018} in the context of machine learning, where the
authors showed how influence can be used to expose vulnerability of
training data, as well as how to easily manipulate training data to
force counterintuitive results, making the case that influence
functions represent a useful instrument to understand black-box
predictions.  However, robust statistics shows that when the influence
function is unbounded, even an infinitesimal perturbation of the data
can induce arbitrarily large changes in the estimator, particularly
when the assumed model fails to adequately represent the observed data
distribution \cite{hampel1986robust}.

The key contribution of this paper is to treat influence as a global
measure of a data set instead of as a local measure at pairs of data
points, and to demonstrate the feasibility of using this concept to
partition training data into blocks on which existing machine learning
methods exhibit higher test accuracy.

\medskip

Supervised learning takes a set of the form
\begin{equation}
\label{eq:sample}
Z = \{z_i = (x_i, y_i) \mid
x_i \in E, y_i \in \text{$\mathbb{R}$  for  $1 \le i \le n$}\}\;,
\end{equation}
where $E$ is some suitable space (e.g., images or $\mathbb{R}^k$), a
parametrized model class $\mathcal{F}_\theta$ with $\theta \in
\mathbb{R}^m$, a real-valued \emph{loss function} $\mathcal{L}_Z$, and
seeks to determine
\begin{equation}
\hat{\theta} = \argmin_{\theta} \mathcal{L}_Z(F_\theta) \;,
\end{equation}
where the primary objective is a model with high prediction quality.
There are now a range of advanced, computationally demanding
architectures to tackle this problem. Examples include deep neural
networks~\cite{LeCun:15}, Mixture of Experts~\cite{Mu:25}, and, more
recently, transformers with self-attention
mechanisms~\cite{Vaswani:17,Lin:22,Geshkovski:25}.
A common assumption is that the \emph{training set} $Z$ is a sample
from some input-output process that can be captured well by a single
statistical distribution $p(y|x,\hat{\theta})$. However, even advanced
models can struggle when this is not the case.

As for quality of prediction, the convergence framework of
sufficiently over-parameterized networks~\cite{Zhu-Li-Song:2019}
guarantees close to zero training loss for any non-degenerate data set
at a global minimum.
When generalization falls short of expectations, a common strategy is
to employ more complex models or architectures. However, recent
studies on learning from heterogeneous or mixture data indicate that a
single global model fails to recover the individual components, and
merely increasing model capacity does not resolve the errors induced
by such heterogeneity~\cite{JacobsJordanNowlanHinton1991,
  mannering2016unobserved, vardhan2025learning}.  Alternatively,
various extrinsic measures, such as the use of domain experts, may be
employed to organize data or remove ``outliers''.
A Variational Autoencoder (VAE) is a framework that maps inputs into a
latent space, enabling effective representation learning,
dimensionality reduction, and modeling of complex data distributions
\cite{KingmaWelling2013}. However, standard VAE assumes a unimodal
latent prior and therefore cannot reliably separate distinct mixtures
of distributions in the data. When the latent representations of
different sub-populations overlap, the decoder will produce outputs
biased toward a global average rather than capturing each distribution
individually~\cite{LavdaGregorovaKalousis2020, SinghOgunfunmi2022}.

The notion of \emph{mixture models} emerged in the statistical
literature to address the situation where this assumption fails. This
work is largely concerned with the case where $Z$ arises through $K$
(a hyperparameter) identical distributions (e.g., Gaussians) albeit
with different parameters. The work has been extended to mixtures of
non-identical distributions, see~\cite{Pal:24}.

A \emph{mixture of experts} (MoE) is a machine learning architecture
with multiple specialized sub-models referred to as \emph{experts} and
a gating network routing input to the subset of the sub-models. It was
introduced in~\cite{Eigen:14,Mu:25,Chen:22,Shazeer:17,Baldacchino:16}
based on the fundamental assumption that the input features contain
sufficient signal to allow the gating network (router) to distinguish
between the underlying distributions. The presence of such a signal is
essential~\cite{JacobsJordanNowlanHinton1991,
  hennig1996identifiability}: suppose we have data generated by two
different functions and no further information. Then the MoE
encounters a one-to-many mapping problem and the router has no basis
to partition the data. As a result, gating weights converge towards an
equal distribution. That is, the model will minimize the global loss
by predicting an average function, rather than recovering the
individual functions, a shortcoming induced by the heterogeneity in
the latent labels rather than the observed features.

\medskip

Our approach is motivated by prior work on multiple sequence arrays of
viral RNA sequences~\cite{Bura:24}. Here data-heterogeneity manifests
through a mixture of viral variants in the data, and the objective is
to partition the data into these variants. Such determinations are
typically accomplished with the help of experts and
experiments. In~\cite{Bura:24} a different method is introduced which
is based on studying the change of the variance of information between
two columns of the array. It is shown that there always exist
sequences whose removal lower the variance of information leading to a
procedure by which blocks of sequences belonging to a single variant
can be identified. The key here is that a variant corresponds to a set
of molecular bonds (motif) which in turn relates certain
columns. Lowering the variance of information is then achieved by
removing sequences that are not consistent with these relations.
In the context of machine learning, choices of loss functions and
architectures can be seen as proxies for these motives. It is
therefore natural to consider utilizing the data-removal induced
change of information fabric to identify sub-samples of data that are
``homogeneous''.

\medskip

Let $Z$ represent a mix of distributions without making assumptions
about their nature or numbers. The results of this paper suggest
considering a two-stage approach to supervised learning: a
stratification step where $Z$ is partitioned into sets $Z_1$ through
$Z_k$, followed by a training step for individual sub-models over the
various $Z_i$'s. Subsequent prediction uses a classifier to route
input to the appropriate sub-model which is then applied, see
Figure~\ref{fig:concept}.
\begin{figure}[ht]
\centerline{\includegraphics[width=0.85\linewidth]{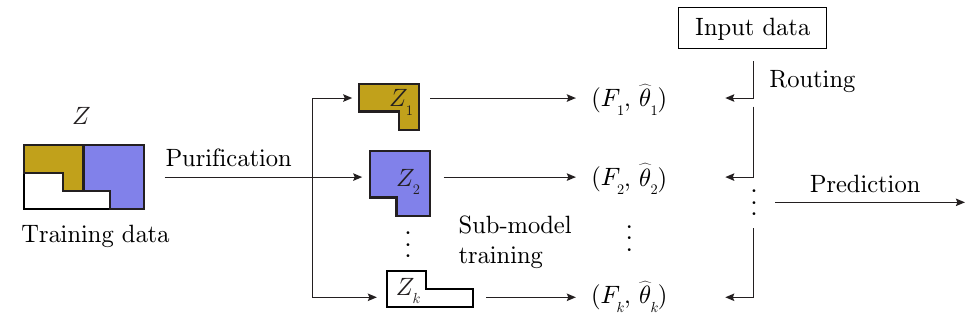}}
\caption{The two-stage approach involving purification followed by
  conventional machine learning. Prediction is done using a classifier
  routing input to the appropriate sub-model.}
\label{fig:concept}
\end{figure}

To describe our approach we fix a model class $\mathcal{F}_\theta$,
loss function $f_Z(\theta) = \sum_z L(z,\theta)$, and introduce the
augmented function $u_Z(\theta) = f_Z(\theta) + \sum_{z} \epsilon_z
L(z,\theta)$. In \cite{Koh-Liang:2018}, using the framework of
\cite{Cook-Weisberg:1980}, it is shown that for~$z,z'\in Z$, the
derivative $\frac{\partial}{\partial \epsilon_z}L(z',\theta)$
quantifies the influence of the scaling of $z$ on the loss at $z'$,
and that this derivative can be positive as well as negative, even
when $Z$ consists of a single distribution.

To extend local influence to a global concept of data set influence,
we introduce a random variable~$X$ defined over pairs $\{z,z'\}\subset
Z$ by $X(\{z,z'\})= \left[\frac{\partial}{\partial \epsilon_z}
  L(z',\hat\theta)\right]$. The properties of $X$ encapsulate features
of the entire data set. Clearly,~$X$ depends on the model class, loss
function as well as the model $\hat{\theta}$.  Once global data
properties are framed as a random variable, a natural next step is to
consider its moments~$\mathbb{E}[X^\ell]$.
Intuitively, the variance should be minimized to obtain blocks $Z_i$
following a single distribution. In order to stratify the set $Z$ into
blocks of minimal variance the key question is therefore if we can
iteratively remove subsets $M$ to minimize the variance, thereby
breaking up~$Z$ into ``homogeneous'' pieces.

Our main results are Theorems~\ref{T:manyk} and~\ref{T:variance}
which, under mild assumptions on the size of $Z$, but also convexity,
guarantee the existence of a set~$M$ whose removal lowers the variance
as well as the even-ordered raw moments of~$X$.
The implication of these results is that stratification of $Z$ and
subsequent training of sub-models can increase test accuracy relative
to the direct approach of conventional training on~$Z$. Our
proof-of-concept experiments indicate that the increase in test
accuracy can be quite significant, even when the convexity assumption
fails to hold.

We emphasize that the blocks generated through the stratification
depend on the underlying loss function and model class
$\mathcal{F}_\theta$. Stratification mimics what happens in
experimental physics where parameters and methods are set and specific
answers are derived accordingly. The choice of model class and loss
function are tantamount to the choice of experimental setup.
We remark that the stratification itself constitutes a window into the
``black box'' of the learning process itself. Here, using a
sufficiently expressive model (which exist by~\cite{Zhu-Li-Song:2019})
to probe $Z$ through moments and variance allows one to identify the
presence of multiple distributions within the input data.

\medskip

Suppose the data follows a single, unique distribution and that we
have a model, $\hat\theta$, with a sufficiently small loss. In this
scenario the influence for any pair $z\neq z'$ is arguably minimal,
i.e.,~$\frac{\partial}{\partial\epsilon_z}L(z',\theta)$ has a small
absolute value and fluctuations are induced by the varying closeness
of the particular $z,z'$ to the model.  In case the input data
contains multiple distributions two effects manifest: first the
quality of the model in terms of prediction can only decrease, and
secondly the absolute values of the derivatives become larger and
larger--even though the loss function is still small as guaranteed by
\cite{Zhu-Li-Song:2019}.  It is informative to consider the first
effect taken to its extreme, namely an input data set that is entirely
random. In this scenario influence measurements are of limited value
since they are relative to the loss function and network
employed. Random data hinders any inferences obtained from employing
these.

\medskip

The paper is organized as follows: In Section~\ref{S:basic} we
establish terminology and some basic facts (Lemma~\ref{L:1}
through~\ref{L:basic}) for how to express influence via the Hessian
\cite{Koh-Liang:2018,Cook-Weisberg:1980}. In Section~\ref{sec:main} we
cover Theorem~\ref{T:manyk}, Theorem~\ref{T:variance}, and
Corollary~\ref{C:1} which, under the assumption of convexity,
establish the fact that there are data points whose removal will
reduce the variance of $X$. A proof outline is provided for
Theorem~\ref{T:manyk}; the full proof is given in
Section~\ref{sec:proofs} along with proofs of all supporting results.
Section~\ref{S:appl} provides proof-of-concept applications showing
how variance captures heterogeneity and how variance-based
purification outlined in Figure~\ref{fig:concept} can lead to a large
increase in test accuracy. Results are established for EMNIST image
data and synthetic data. The discussion in
Section~\ref{sec:discussion} includes an entropy perspective on the
heterogeneity captured by the random variable $X$ as well as an
outlook.

\section{Some preliminary facts}
\label{S:basic}

In the following, we will suppress the model class $\mathcal{F}$ thus
identifying $F_\theta$ with $\theta$, and, by common convention, will
consider loss functions of the form
\begin{equation}
\label{eq:loss}
f_Z(\theta) = {1\over n} \sum_{z\in Z} L(z,\theta) \;,
\end{equation}
with $\theta \in \mathbb{R}^m$. The following lemma is well-known; we
include its proof and those of the following lemmata in
Section~\ref{sec:proofs}.
\begin{lemma}
\label{L:1}
Let $f$ be a convex, twice continuously differentiable function
$f\colon \mathbb{R}^n\rightarrow \mathbb{R}$ and
\begin{equation}
\label{eq:f2}
f_{2,\theta_0}(\theta) =
    f(\theta_0) +\nabla f(\theta_0)(\theta-\theta_0)
  + (\theta-\theta_0)^T\nabla^2f(\theta_0)(\theta-\theta_0)
\end{equation}
its second order Taylor approximation at $\theta_0$. Then
$f_{2,\theta_0}$ assumes its minimum at
\begin{equation}
\label{eq:theta_hat}
\hat{\theta} =
\argmin_\theta f_{2,\theta_0}(\theta)
  = \theta_0 - [\nabla^2 f(\theta_0)]^{-1} \nabla f(\theta_0) \;.
\end{equation}
\end{lemma}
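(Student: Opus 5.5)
The approach is the standard one for minimizing a quadratic. Write $g=\nabla f(\theta_0)$ and $H=\nabla^2 f(\theta_0)$, and set $\delta=\theta-\theta_0$. Then $f_{2,\theta_0}$ becomes a quadratic function $q(\delta)$ of $\delta$, and I will show that $q$ is convex, locate its unique critical point, and translate back to $\theta$.

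\emph{Convexity and invertibility.} Since $f$ is convex and $C^2$, its Hessian at every point is positive semidefinite, so $H\succeq 0$. Consequently $q$, whose Hessian is a positive multiple of $H$, is convex. For \eqref{eq:theta_hat} to make sense, $[\nabla^2 f(\theta_0)]^{-1}$ must exist. I will therefore assume, as the statement implicitly does, that $H$ is nonsingular. Together with $H\succeq 0$ this gives $H\succ 0$, so $q$ is strictly convex and coercive. Hence $q$ has a unique minimizer, and it is characterized by $\nabla q(\delta)=0$. Without invertibility the minimum might not exist (if $g\notin\operatorname{range}(H)$) or might not be unique. I would state this as a remark, since this is where the bare convexity hypothesis is insufficient.

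\emph{Critical point.} Differentiating gives $\nabla q(\delta)=g+cH\delta$, using the symmetry of $H$, which follows from $f\in C^2$ by Schwarz's theorem. Here $c$ is twice the coefficient of the quadratic term. Solving $\nabla q(\delta)=0$ gives $\delta^*=-c^{-1}H^{-1}g$, and hence $\hat\theta=\theta_0-c^{-1}[\nabla^2 f(\theta_0)]^{-1}\nabla f(\theta_0)$.

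The main point of care is the constant $c$, and here the statement contains a typo. With the genuine second order Taylor polynomial, which carries the factor $\tfrac12$ in front of the quadratic term, we have $c=1$, and this yields exactly \eqref{eq:theta_hat}. With \eqref{eq:f2} as literally printed, without the $\tfrac12$, one gets $c=2$ and an extra factor $\tfrac12$ in the Newton step. The intended reading is clearly the Taylor approximation with the $\tfrac12$, since the lemma calls $f_{2,\theta_0}$ ``its second order Taylor approximation.'' So I would prove the statement with the quadratic term $\tfrac12(\theta-\theta_0)^T\nabla^2f(\theta_0)(\theta-\theta_0)$ and note the correction explicitly. Beyond this bookkeeping and the invertibility hypothesis, the argument is routine.
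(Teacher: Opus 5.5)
Your proposal is correct and is essentially the paper's proof: use convexity of $f_{2,\theta_0}$, set $\nabla f_{2,\theta_0}(\theta)=\nabla f(\theta_0)+\nabla^2 f(\theta_0)(\theta-\theta_0)$ to zero, and solve. The paper's own computation also silently puts the factor $\tfrac12$ on the quadratic term and tacitly assumes $\nabla^2 f(\theta_0)$ is invertible, so your correction of the printed \eqref{eq:f2} and your invertibility remark match what the authors actually do.
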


Let $Z$ be as in~\eqref{eq:sample}, set $\epsilon = (\epsilon_z)_{z\in
  Z}$, and define the function $u_\epsilon(\theta)$ by
\begin{equation}
  u_\epsilon(\theta)=f_Z(\theta)+\sum_{z\in Z} \epsilon_z L(z,\theta) \;,
\end{equation}
where we by construction have~$u_{0}(\theta) = f_Z(\theta)$. The
function~$u_\epsilon$ can be thought of as an $\epsilon$-deformation
of~$f_Z$ at~$0$.
We set $\hat\theta_{u_\epsilon} = \argmin_\theta
u_{\epsilon}(\theta)$, which depends on $\epsilon$, note that
$\hat\theta_{u_0} = \hat\theta_{f_Z} = \hat\theta$, and write $H =
\nabla^2f_Z(\hat\theta)$ for the Hesse matrix of $f_Z$ at
$\hat\theta$. In the following, we will drop $\epsilon$ from
$u_\epsilon$ and simply write $u$ when it is clear from the context.

\begin{lemma}\label{L:theta-diff}
\begin{equation}\label{E:0}
\frac{\partial}{\partial \epsilon_z} \hat\theta_{u_0}
  = - H^{-1}\nabla L(z,\hat\theta).
\end{equation}
\end{lemma}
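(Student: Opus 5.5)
The plan is to differentiate the first-order optimality condition for $\hat\theta_{u_\epsilon}$ with respect to $\epsilon_z$ at $\epsilon=0$, using the implicit function theorem. As an alternative, more in the spirit of Lemma~\ref{L:1}, I will also sketch a Newton-step argument that yields the same formula. Throughout I assume, as is implicit in the statement, that each $L(z,\cdot)$ is twice continuously differentiable. I also assume that $H=\nabla^2 f_Z(\hat\theta)$ is invertible, which holds for instance under strict convexity at the minimizer.

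\emph{Step 1 (stationarity).} Since $\hat\theta_{u_\epsilon}$ minimizes the smooth function $u_\epsilon$, it satisfies
\[
G(\theta,\epsilon):=\nabla f_Z(\theta)+\sum_{z'\in Z}\epsilon_{z'}\nabla L(z',\theta)=0
\quad\text{at } \theta=\hat\theta_{u_\epsilon}.
\]
At $\epsilon=0$ this reduces to $\nabla f_Z(\hat\theta)=0$.

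\emph{Step 2 (implicit function theorem).} $G$ is $C^1$ in $(\theta,\epsilon)$. Its Jacobian in $\theta$ at $(\hat\theta,0)$ is $\nabla^2 f_Z(\hat\theta)=H$, which is invertible. Hence there is a unique $C^1$ map $\epsilon\mapsto\theta(\epsilon)$ near $0$ with $G(\theta(\epsilon),\epsilon)=0$ and $\theta(0)=\hat\theta$.

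\emph{Step 3 (differentiation).} Differentiating $G(\theta(\epsilon),\epsilon)=0$ with respect to $\epsilon_z$ at $\epsilon=0$ gives
\[
H\,\frac{\partial\theta}{\partial\epsilon_z}\Big|_{0}+\nabla L(z,\hat\theta)=0 .
\]
Solving yields \eqref{E:0}.

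\emph{Alternative via Lemma~\ref{L:1}.} Apply Lemma~\ref{L:1} to $u_\epsilon$ at $\theta_0=\hat\theta$, taking the quadratic approximation with the usual factor $\tfrac12$. The approximate minimizer is
\[
\hat\theta-\bigl[H+\textstyle\sum\epsilon_{z'}\nabla^2L(z',\hat\theta)\bigr]^{-1}\textstyle\sum\epsilon_{z'}\nabla L(z',\hat\theta).
\]
The gradient term in this expression is already $O(\epsilon)$, so to first order in $\epsilon$ only $H^{-1}$ survives. Differentiating in $\epsilon_z$ at $0$ again gives $-H^{-1}\nabla L(z,\hat\theta)$.

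\emph{Main obstacle.} The main obstacle is justifying that the implicit-function branch $\theta(\epsilon)$ is actually the minimizer $\hat\theta_{u_\epsilon}$, so that this $\argmin$ is well defined and differentiable. I would handle this using convexity of $f_Z$ together with positive definiteness of $H$. Since $H\succ0$ and the Hessian of $u_\epsilon$ is continuous in $(\theta,\epsilon)$, the Hessian stays positive definite on a neighbourhood of $(\hat\theta,0)$. For small $\epsilon$ the critical point $\theta(\epsilon)$ is therefore a strict local minimizer of $u_\epsilon$. If additionally $u_\epsilon$ remains convex, which holds when each $L(z,\cdot)$ is convex, then this local minimizer is the unique global minimizer.
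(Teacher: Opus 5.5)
Your proof is correct, but your primary argument takes a different route from the paper's; your ``alternative via Lemma~\ref{L:1}'' is essentially the paper's proof. The paper applies Lemma~\ref{L:1} to $u_\epsilon$ at $\theta_0=\hat\theta$ and writes $\hat\theta_{u_\epsilon}-\hat\theta=-[\nabla^2u_\epsilon(\hat\theta)]^{-1}\nabla u_\epsilon(\hat\theta)$ with $\nabla u_\epsilon(\hat\theta)=\sum_t\epsilon_t\nabla L(t,\hat\theta)$. It then differentiates by the product rule and notes that the term containing the derivative of the inverse Hessian vanishes at $\epsilon=0$, because the gradient factor is zero there.

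That argument is short and reuses Lemma~\ref{L:1}. However, it tacitly identifies the true minimizer $\hat\theta_{u_\epsilon}$ with the minimizer of the quadratic model of $u_\epsilon$ at $\hat\theta$ (one Newton step), which agrees with it only to first order in $\epsilon$. It also never checks that $\epsilon\mapsto\hat\theta_{u_\epsilon}$ is well defined and differentiable.

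Your implicit-function argument differentiates the exact minimizer and makes the needed hypotheses explicit: $C^2$ losses, $H$ invertible, and convexity of $u_\epsilon$ for small $\epsilon$. Comparing your two computations shows that the Newton step and $\hat\theta_{u_\epsilon}$ have the same derivative at $\epsilon=0$, so your argument in fact justifies the paper's shortcut.

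Your route also extends correctly to higher order. Differentiating the stationarity condition once more (for $C^3$ losses) gives the two terms in Lemma~\ref{L:double-diff-theta} plus an extra term $-H^{-1}\nabla^3 f_Z(\hat\theta)[H^{-1}\nabla L(x,\hat\theta),H^{-1}\nabla L(z,\hat\theta)]$, which the quadratic-model identification cannot produce. You were also right to insert the factor $\tfrac12$: the statement of Lemma~\ref{L:1} omits it, though its proof uses it.
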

In the following, it is convenient to introduce the symmetric,
positive definite, bi-linear form~$\beta_f$ given by
\begin{equation*}
  \beta_f(v,w) = \langle v, w \rangle = v^T H^{-1} w \;.
\end{equation*}

\begin{lemma}\label{L:basic}
\begin{align*}
\frac{\partial}{\partial\epsilon_z}L(z',\hat\theta_{u_{0}})
&= - \nabla L(z',\hat\theta)^T  \left[\nabla^2f(\hat\theta)\right]^{-1}\nabla L(z,\hat\theta) \\
\frac{\partial}{\partial\epsilon_z}L(z',\hat\theta_{u_{0}})
&= - \langle \nabla L(z',\hat\theta),  \nabla L(z,\hat\theta)\rangle
 = \frac{\partial}{\partial\epsilon_{z'}}L(z,\hat\theta_{u_{0}})
\end{align*}
\end{lemma}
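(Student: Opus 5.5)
The plan is to apply the chain rule to the composite map $\epsilon\mapsto L(z',\hat\theta_{u_\epsilon})$ at $\epsilon=0$ and then substitute Lemma~\ref{L:theta-diff}.

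\textbf{Step 1: the chain rule.} Since $L(z',\cdot)$ is differentiable and $\hat\theta_{u_\epsilon}$ depends differentiably on $\epsilon$ near $0$, we get
\begin{equation*}
\frac{\partial}{\partial\epsilon_z}L(z',\hat\theta_{u_0})
= \nabla L(z',\hat\theta)^T\,\frac{\partial}{\partial \epsilon_z}\hat\theta_{u_0}.
\end{equation*}
Differentiability of $\hat\theta_{u_\epsilon}$ is the same regularity already used in Lemma~\ref{L:theta-diff}, so I take it for granted. It comes from the implicit function theorem applied to $\nabla u_\epsilon(\theta)=0$, using that $H$ is invertible.

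\textbf{Step 2: substitute Lemma~\ref{L:theta-diff}.} Inserting \eqref{E:0} gives
\begin{equation*}
-\nabla L(z',\hat\theta)^T H^{-1}\nabla L(z,\hat\theta),
\end{equation*}
and $H=\nabla^2 f_Z(\hat\theta)$. This is the first identity.

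\textbf{Step 3: rewrite via $\beta_f$ and use symmetry.} By the definition of $\beta_f$, the expression in Step 2 is $-\langle \nabla L(z',\hat\theta),\nabla L(z,\hat\theta)\rangle$. Since $H$ is symmetric (Schwarz's theorem, as $f_Z$ is $C^2$), so is $H^{-1}$. Hence $\beta_f$ is symmetric, and swapping the two arguments gives the same value. Running Steps 1 and 2 with the roles of $z$ and $z'$ interchanged yields exactly this swapped expression for $\frac{\partial}{\partial\epsilon_{z'}}L(z,\hat\theta_{u_0})$, which gives the second equality.

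\textbf{Main obstacle.} The only delicate point is justifying the chain rule, namely differentiability of $\epsilon\mapsto\hat\theta_{u_\epsilon}$. This requires $H$ to be invertible, which holds under the convexity/positive-definiteness assumption implicit in defining $\beta_f$ as positive definite. Beyond that the argument is routine.
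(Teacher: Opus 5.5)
Your proposal is correct and follows essentially the same route as the paper. Like the paper, you apply the chain rule to $\epsilon\mapsto L(z',\hat\theta_{u_\epsilon})$, substitute Lemma~\ref{L:theta-diff}, and use the symmetry of $H^{-1}$ to interchange $z$ and $z'$. Your version is slightly cleaner on two points: you justify the differentiability of $\hat\theta_{u_\epsilon}$ explicitly through the implicit function theorem, and you avoid the sign slip in the paper's intermediate line $-\left[\nabla L(z,\hat\theta_{u_0})\right]^T \frac{\partial}{\partial\epsilon_{z'}}\hat\theta_{u_0}$, which should carry no minus sign.
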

The $z/z'$ symmetry in the latter expression has, to our knowledge,
not been previously stated and will be of particular relevance for the
proof of the main result.

\section{Main results}
\label{sec:main}

Let $Z=\{z_1,\dots,z_n\}$ be a sample as in~\eqref{eq:sample} and introduce
$$
X_u(\{z,z'\})= \frac{\partial}{\partial \epsilon_z} L(z',\hat\theta_u)
$$
which is well-defined by Lemma~\ref{L:basic} since
$\frac{\partial}{\partial \epsilon_z}
L(z',\hat\theta)=\frac{\partial}{\partial \epsilon_{z'}}
L(z,\hat\theta)$.
We are interested in studying what happens when a subset $M\subset Z$
is removed from $Z$. To this end, let $\vert M\vert =s \le n = \vert
Z\vert$ and define $\eta_M =(\epsilon_t)_t$ by
$$
\epsilon_t =
\begin{cases}
  -1/n\;, & \text{\rm if } t\in M \\
     0\;, & \text{\rm otherwise,}
\end{cases}
$$
and thus
\begin{equation*}
  u_{\eta_M}(\theta) =
  f_Z(\theta) - \sum_{t\in M} \frac{1}{n} L(t,\theta) \;.
\end{equation*}
Note that the loss function with respect to $Z\setminus M$ is
$f_{Z\setminus M}(\theta) = \frac{1}{n-s} \sum\limits_{t\in Z\setminus
  M}L(t,\theta)$ is different from the function
\begin{equation*}
  u_{\eta_M}(\theta) =\frac{1}{n}\sum_{t\in Z\setminus M} L(t,\theta) \;,
\end{equation*}
derived from $f_Z(\theta)$ by removing all terms $L(t, \theta)$ for
$t\in M$.
However, for any constant $\alpha > 0$ we have $\argmin_\theta f =
\argmin_\theta (\alpha f)$ and thus
\begin{equation*}
    \argmin_\theta f_{Z\setminus M} =\argmin_\theta u_{ \eta_M} \;,
\end{equation*}
that is,~$f_{Z\setminus M}$ assumes a minimum at $\hat\theta_{u_{
    \eta_M}}$.
With $f_{Z\setminus M}$ we have associated loss terms
$L(z',\hat\theta_{u_{\eta_M}})$ which are exhibiting a different
model, specific to the set $M$ of data points being
removed. Accordingly we have the random variable
$$
X_{u_0}=\frac{\partial}{\partial \epsilon_z}L(z',\hat\theta_{u_{0}})
$$
describing the original system with twice continuously differentiable
convex loss function~$f_Z(\hat\theta_{u_0})=\frac{1}{n}\sum_t
L(t,\hat\theta_{u_0})$, and the random variable
$$
X_{u_{\eta_M}}=\frac{\partial}{\partial \epsilon_z}L(z',\hat\theta_{u_{\eta_M}}) \;,
$$
describing the systems where the data set $M$ has been removed with
loss functions~$f_{Z\setminus
  M}(\hat\theta_{u_{\eta_M}})=\frac{1}{n-s}\sum_t L(t,\hat\theta_{u_{
    \eta_M}})$.
We view the moments
$$
\mathbb{E}[X_u^k] =
\frac{1}{\binom{n}{2}} \sum_{\{z,z'\}, z\neq z'\atop z,z'\in Z}
\left[\frac{\partial}{\partial\epsilon_z} L(z',\hat\theta_{u|_{ \eta_M}})\right]^k
$$
as an influence potential of a data set $Z$ for given loss function
$f_Z(\theta)$ having its minimum at~$\hat\theta_{u|_{ \eta_M}}$.

\begin{theorem}\label{T:manyk}
Let $M\subset Z$ with $\vert M\vert=s$, where $s\le n_0$, and $n$ is
sufficiently large.  Furthermore, let $X_{u_0}$, $X_{u_{\eta_M}}$ be
the random variables
$X_{u_0}(\{z,z'\})=\frac{\partial}{\partial\epsilon_z}
L(z',\hat\theta_{u_{ 0}})$ and
$X_{u_{\eta_M}}(\{z,z'\})=\frac{\partial}{\partial\epsilon_z}L(z',\hat\theta_{u_{
    \eta_M}})$, respectively.  Then
\begin{equation}
  \mathbb{E}[X_{u_0}^k]
  - \frac{1}{\binom{n}{s}}\sum_{M\subset  Z\atop \vert M\vert = s}  \mathbb{E}[X_{u_{\eta_M}}^k]=
  \frac{k s}{n-2}  \mathbb{E}[X_{u_0}^k] +O(n^{-2}).
\end{equation}
\end{theorem}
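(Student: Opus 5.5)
The plan is a first-order perturbation expansion in $\eta_M$, followed by averaging over the $\binom{n}{s}$ subsets $M$. I read $\mathbb{E}[X_{u_{\eta_M}}^k]$ as the same normalized sum over pairs $\{z,z'\}$ as in the definition, with $\hat\theta_{u_{\eta_M}}$ and the Hessian of $u_{\eta_M}$ in place of $\hat\theta$ and $H$. The perturbation $\eta_M$ has entries $-1/n$ and at most $s\le n_0$ of them are nonzero, so every quantity depending smoothly on $\epsilon$ can be Taylor expanded around $\epsilon=0$. The remainder is $O(\Vert\eta_M\Vert^2)=O(s^2/n^2)=O(n^{-2})$, uniformly in $M$ for $n$ large; this uses convexity and twice continuous differentiability, as in Lemma~\ref{L:1}, together with the implicit function theorem behind Lemma~\ref{L:theta-diff}.

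Concretely, by Lemma~\ref{L:basic}, applied to $u_{\eta_M}$ instead of $f_Z$,
\begin{equation*}
X_{u_{\eta_M}}(\{z,z'\}) = -\nabla L(z',\hat\theta_{u_{\eta_M}})^T H_{\eta_M}^{-1}\nabla L(z,\hat\theta_{u_{\eta_M}}).
\end{equation*}
I expand each factor to first order in $\eta_M$.
\begin{itemize}
\item By Lemma~\ref{L:theta-diff}, $\hat\theta_{u_{\eta_M}}-\hat\theta = \frac1n\sum_{t\in M}H^{-1}\nabla L(t,\hat\theta)+O(n^{-2})$.
\item The Hessian satisfies $H_{\eta_M}=H-\frac1n\sum_{t\in M}\nabla^2L(t,\hat\theta)+(\text{shift of }\hat\theta)\cdot\nabla^3 f_Z+O(n^{-2})$. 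Hence $H_{\eta_M}^{-1}=H^{-1}+\frac1n H^{-1}\bigl(\sum_{t\in M}\nabla^2 L(t,\hat\theta)\bigr)H^{-1}+\cdots$.
\end{itemize}
Raising to the $k$-th power gives
\begin{equation*}
X_{u_{\eta_M}}^k = X_{u_0}^k + kX_{u_0}^{k-1}\,\Delta_M(z,z') + O(n^{-2}),
\end{equation*}
where $\Delta_M$ is linear in the contributions of the points $t\in M$. The factor $k$ in the statement originates here.

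Next I average over $M$ by interchanging the sum over $M$ with the sum over pairs. For a fixed pair, the linear term splits according to whether $t\in M$ is one of $z,z'$ or one of the other $n-2$ points. The first kind contributes at relative order $s/n^2$ and is absorbed in the error. For the second kind, averaging over $M$ replaces $\sum_{t\in M,\,t\ne z,z'}$ by $\frac{s}{n-2}\sum_{t\ne z,z'}$ to the relevant order. The key identity is stationarity: $\nabla f_Z(\hat\theta)=0$ gives $\sum_{t}\nabla L(t,\hat\theta)=0$, hence
\begin{equation*}
\sum_{t\ne z,z'}\nabla L(t,\hat\theta)=-\nabla L(z,\hat\theta)-\nabla L(z',\hat\theta).
\end{equation*}
This identity accounts for the denominator $n-2$ in the statement.

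The parameter-shift part of $\Delta_M$ then becomes a combination of $\langle\nabla L(z),\nabla L(z')\rangle$-type terms. Using the $z/z'$ symmetry of Lemma~\ref{L:basic}, I would try to collapse these to a multiple of $X_{u_0}(\{z,z'\})$ itself. For the Hessian part, $\frac1n\sum_{t\ne z,z'}\nabla^2 L(t)=H-\frac1n(\nabla^2L(z)+\nabla^2L(z'))$, so $H^{-1}(\cdot)H^{-1}$ collapses to $H^{-1}$ up to $O(1/n)$. This contributes $\frac{s}{n-2}\cdot(-X_{u_0})$, again proportional to $X$. Collecting terms, $kX_{u_0}^{k-1}\cdot\frac{-s}{n-2}X_{u_0}$ summed over pairs yields $-\frac{ks}{n-2}\mathbb{E}[X_{u_0}^k]$ for the average of $\mathbb{E}[X_{u_{\eta_M}}^k]-\mathbb{E}[X_{u_0}^k]$, which is the claim.

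The main obstacle is the third-derivative terms coming from moving the evaluation point $\hat\theta\to\hat\theta_{u_{\eta_M}}$, both in $\nabla L(z,\cdot)$ and in $H$. After averaging with the stationarity identity, these become terms like $\nabla^2L(z,\hat\theta)H^{-1}\nabla L(z,\hat\theta)$, which are not obviously multiples of $X$. I would first check whether their total contribution is $O(1/n)$ relative to the leading term, since they involve only the two points $z,z'$ rather than a sum over $n$ points, so that they fall into the $O(n^{-2})$ error. I would also check the normalization convention: how pairs meeting $M$ are counted, and whether $\binom{n}{2}$ or $\binom{n-s}{2}$ is used. If the collapse to $X$ fails, I would have to add an assumption that these local curvature terms are bounded uniformly in $n$.
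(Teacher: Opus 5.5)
Your route is the paper's route with different bookkeeping. You expand to first order in $\eta_M$ and average over $M\subset Z\setminus\{z,z'\}$; this averaging, via $\binom{n-3}{s-1}/\binom{n-2}{s}$, is where $\frac{s}{n-2}$ comes from. You then collapse using $\sum_t\nabla L(t,\hat\theta)=0$ and $\sum_t\nabla^2L(t,\hat\theta)=nH$, which are exactly the three terms in the proof of Lemma~\ref{L:claim0}.

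The step that fails is the sign of the Hessian contribution. Take $X_{u_0}(\{z,z'\})=-\nabla L(z',\hat\theta)^TH^{-1}\nabla L(z,\hat\theta)$ together with your own expansion $H_{\eta_M}^{-1}=H^{-1}+\frac1n H^{-1}\bigl(\sum_{t\in M}\nabla^2L(t,\hat\theta)\bigr)H^{-1}+\cdots$. The $M$-average of that term is then
\[
-\frac{s}{n-2}\,\nabla L(z',\hat\theta)^T H^{-1}\nabla L(z,\hat\theta)+O(n^{-2})
=\frac{s}{n-2}\,X_{u_0}(\{z,z'\})+O(n^{-2}),
\]
not $-\frac{s}{n-2}X_{u_0}(\{z,z'\})$. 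Carried through, your argument yields $\mathbb{E}[X_{u_0}^k]-\frac{1}{\binom ns}\sum_M\mathbb{E}[X_{u_{\eta_M}}^k]=-\frac{ks}{n-2}\mathbb{E}[X_{u_0}^k]+O(n^{-2})$, which has the opposite sign to the statement.

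This sign is forced. Setting every weight to $\frac1n+\epsilon$ turns $u$ into $(1+n\epsilon)f_Z$. That leaves $\hat\theta$ fixed and divides $X$ by $1+n\epsilon$, so $\sum_{y\in Z}\frac{\partial}{\partial\epsilon_y}X_u\big|_{\epsilon=0}=-nX_{u_0}$. Equivalently, $u_{\eta_M}=\frac{n-s}{n}f_{Z\setminus M}$ exactly. Hence $X_{u_{\eta_M}}$ is $\frac{n}{n-s}$ times the influence variable computed from $f_{Z\setminus M}$ and its own Hessian. Removing points lowers the curvature of $u$ and inflates $\vert X\vert$.

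So the gap cannot be closed. The displayed identity fails as soon as $\mathbb{E}[X_{u_0}^k]$ is not itself $O(n^{-1})$, for example $k=2$ with non-vanishing gradients at $\hat\theta$. The paper's proof reaches the $+$ sign through the same slip, in Lemma~\ref{L:claim0}. Its computation gives $\sum_{y}\frac{\partial^2}{\partial\epsilon_y\partial\epsilon_z}L(z',\hat\theta_{u_0})=n\langle\nabla L(z',\hat\theta),\nabla L(z,\hat\theta)\rangle$. By Lemma~\ref{L:basic} this equals $-n\frac{\partial}{\partial\epsilon_z}L(z',\hat\theta_{u_0})$, not $+n\frac{\partial}{\partial\epsilon_z}L(z',\hat\theta_{u_0})$, and the slip recurs in the last line of the proof of Lemma~\ref{L:claim1}. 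What is provable is the sign-reversed identity. If the reduced system is instead normalized by $f_{Z\setminus M}$ rather than $u_{\eta_M}$, the factor $(\frac{n-s}{n})^k$ cancels the first-order term entirely, leaving only $O(n^{-2})$.

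By contrast, the obstacle you flagged is harmless. Every first-order term from moving $\hat\theta$ is linear in $\delta_M=\frac1n\sum_{t\in M}H^{-1}\nabla L(t,\hat\theta)$. This covers the terms in $\nabla L(z,\cdot)$ and $\nabla L(z',\cdot)$, and the $\nabla^3f_Z$ part of $H_{\eta_M}$. By stationarity, the average of $\delta_M$ over $M\subset Z\setminus\{z,z'\}$ is $-\frac{s}{n(n-2)}H^{-1}\bigl(\nabla L(z,\hat\theta)+\nabla L(z',\hat\theta)\bigr)$. These terms are therefore $O(n^{-2})$, with no assumption beyond the uniform derivative bounds already implicit in the $O(\cdot)$ terms.

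Your normalization worry is also settled. Pairs counted in $\mathbb{E}[X_{u_{\eta_M}}^k]$ never meet $M$, and the identity $\binom n2\binom{n-2}{s}=\binom ns\binom{n-s}{2}$ turns the $M$-average into a pair-by-pair average over $M\subset Z\setminus\{z,z'\}$.
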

The full proof is given in Section~\ref{sec:proofs} on
page~\pageref{proof:T:manyk}; here we provide an outline of the main
arguments.

We view
$\left[\frac{\partial}{\partial\epsilon_z}L(z',\hat\theta_{u})\right]^k$
as a function of $\epsilon = (\epsilon_t)_t$, form its Taylor
expansion at $(\epsilon_t)_t=0$, and subsequently evaluate at
$\epsilon = \eta_M$.
Next we consider pairs $(M,\{z',z\})$ where $z,z'\not\in M$. Summing
over all pairs $(M,\{z',z\})$ we provide interpretations of the key
expressions
\begin{eqnarray*}
\frac{1}{\binom{n-2}{s}} \frac{1}{\binom{n}{2}}
\sum_{(M,\{z',z\}) \atop \vert M\vert=s} \left[\frac{\partial}{\partial\epsilon_z} L(z',\hat\theta_{u_{  0}}) \right]^k  & = & \mathbb{E}[X_{u_0}^k]\\
\frac{1}{\binom{n-s}{2}} \sum_{\{z,z'\}\cap M=\varnothing}  \left[\frac{\partial}{\partial\epsilon_z}
    L(z',\hat\theta_{u_{ \eta_M}})\right]^k & = &  \mathbb{E}[X_{u_{\eta_M}}^k],
\end{eqnarray*}
as well as
\begin{eqnarray*}
\frac{1}{\binom{n}{s}} \frac{1}{\binom{n-s}{2}}  \sum_{(M,\{z',z\}) \atop \vert M\vert =s}  \left[\frac{\partial}{\partial\epsilon_z}
  L(z',\hat\theta_{u_{ \eta_M}})\right]^k
& = & \frac{1}{\binom{n}{s}}\sum_{M\subset  Z\atop \vert M\vert =s}  \mathbb{E}[X_{u_{\eta_M}}^k].
\end{eqnarray*}
Since $f_{Z\setminus M}$ assumes a minimum at
$\hat\theta_{u_{\eta_M}}$ we derive from these interpretations
\begin{align*}
  \Delta_{s}(\mathbb{E}[X_{u_0}^k])
  &=
  \rho_s \sum_{(M,\{z',z\})} \left[\frac{\partial}{\partial\epsilon_z} L(z',\hat\theta_{u_{0}}) \right]^k -
  \left[\frac{\partial}{\partial\epsilon_z}L(z',\hat\theta_{u_{\eta_M}})\right]^k \\
&= k \rho_s \sum_{\{z,z'\}} \left[\left[ \frac{\partial}{\partial\epsilon_z}
    L(z',\hat\theta_{u_{0}}) \right]^{k-1} \left[\sum_{M} \sum_{y\in M}\left[\frac{\partial^2}
       {\partial \epsilon_y\partial \epsilon_z} L(z',\hat\theta_{u_{0}} ) \cdot
    \left(\frac{1}{n}\right)  + O(n^{-2})\right]\right ]\right]  .
\end{align*}
We then observe that the inner double sum on the RHS simplifies to
\begin{equation}\label{E:reorganize0}
\binom{n-3}{s-1}
\sum_{y\in Z\atop y\neq z,z'}
\left[
  \frac{\partial^2} {\partial \epsilon_y\partial \epsilon_z} L(z',\hat\theta_{u_{0}} )
  \left(\frac{1}{n}\right)+O(n^{-2})
\right]
\end{equation}
which allows us employ Lemma~\ref{L:claim1} and subsequently derive
\begin{align*}
   \Delta_s(\mathbb{E}[X_{u_0}^k] &=
  \mathbb{E}[X_{u_0}^k] - \frac{1}{\binom{n}{s}} \sum_{M\subset  Z\atop \vert M\vert =s}
  \mathbb{E}[X_{u_{\eta_M}}^k] \\
&=   \frac{k s}{n-2}  \mathbb{E}[X_{u_0}^k]   +   \frac{k s}{n-2}  O(n^{-1+2-2})\;.
\end{align*}

It is worth noting that in case of the removal of large sets $M$, say
$\vert M\vert =O(n)$, the $M$-indexed Taylor expansion, as detailed in
Equation~\eqref{E:local44} of Section~\ref{sec:proofs}, does not
produce information about individual, $M$-indexed terms
$\left[\frac{\partial}{\partial\epsilon_z} L(z',\hat\theta_{u_{
      \eta_M}})\right]^k$ since the total differential involves $O(n)$
terms and the quadratic term error term in Equation~\eqref{E:local44}
becomes a constant.

Theorem~\ref{T:manyk} is concerned with the average over all such
$M$-indexed terms and as the proof shows these averages,
$\frac{1}{\binom{n}{s}} \sum_{M\subset Z\atop \vert M\vert =s}
\mathbb{E}[X_{u_{\eta_M}}^k]$, are expressed via
Equation~\eqref{E:reorganize}, see Section~\ref{sec:proofs} for
details, as multiples of a sum over all $y\in Z$, $y\neq z,z'$. The
condition $s\le n_0$ and $n$ being sufficiently large assures that the
Taylor expansion approximation of
$\left[\frac{\partial}{\partial\epsilon_z}L(z',\hat\theta_{u_{\eta_M}})\right]^k$
and in particular the quadratic error term in
Equation~\eqref{E:local44} are well-defined.

The following statements represent basic facts needed to prove Lemma
\ref{L:claim0}, which in turn implies Lemma~\ref{L:claim1}.
\begin{lemma}\label{L:double-diff-theta}
\begin{equation*}
  \frac{\partial^2}{\partial \epsilon_z\partial \epsilon_x} \hat\theta_{u_{0}}=
H^{-1} \left[\nabla^2 L(z,\hat\theta)\right] H^{-1}  \cdot \nabla L(x,\hat\theta)
+ H^{-1} \left[\nabla^2 L(x,\hat\theta)\right] H^{-1}  \cdot \nabla L(z,\hat\theta).
\end{equation*}
\end{lemma}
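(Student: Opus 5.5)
The plan is to characterize $\hat\theta_{u_\epsilon}$ implicitly through its first-order optimality condition and differentiate that identity twice, the same way Lemma~\ref{L:theta-diff} was obtained. Since $u_\epsilon$ is convex and twice continuously differentiable, $\hat\theta_{u_\epsilon}$ satisfies
\begin{equation*}
\nabla f_Z(\hat\theta_{u_\epsilon}) + \sum_{t\in Z}\epsilon_t \nabla L(t,\hat\theta_{u_\epsilon}) = 0 .
\end{equation*}
Write $H_\epsilon = \nabla^2 u_\epsilon(\hat\theta_{u_\epsilon})$, so that $H_0=H$. This matrix is invertible for $\epsilon$ near $0$, and the implicit function theorem then makes $\epsilon\mapsto\hat\theta_{u_\epsilon}$ differentiable near $0$.

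Differentiating the optimality identity in $\epsilon_x$ gives $H_\epsilon\,\partial_{\epsilon_x}\hat\theta_{u_\epsilon} + \nabla L(x,\hat\theta_{u_\epsilon}) = 0$ for all small $\epsilon$. Hence
\begin{equation*}
\frac{\partial}{\partial\epsilon_x}\hat\theta_{u_\epsilon} = -H_\epsilon^{-1}\nabla L(x,\hat\theta_{u_\epsilon}),
\end{equation*}
which at $\epsilon=0$ is Lemma~\ref{L:theta-diff}. I will now differentiate this expression in $\epsilon_z$ using the product rule. There are two contributions.

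\emph{The inverse Hessian.} Using $\partial(H_\epsilon^{-1}) = -H_\epsilon^{-1}(\partial H_\epsilon)H_\epsilon^{-1}$ we need $\partial_{\epsilon_z}H_\epsilon$. Its explicit $\epsilon$-dependence contributes $\nabla^2 L(z,\hat\theta)$. Its dependence through $\hat\theta_{u_\epsilon}$ contributes a third-derivative term $\nabla^3 u_\epsilon[\partial_{\epsilon_z}\hat\theta]$.

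\emph{The gradient factor.} Its contribution is $\nabla^2 L(x,\hat\theta)\,\partial_{\epsilon_z}\hat\theta = -\nabla^2L(x,\hat\theta)H^{-1}\nabla L(z,\hat\theta)$, where I substitute Lemma~\ref{L:theta-diff} again.

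Collecting the signs at $\epsilon=0$, the first contribution yields $H^{-1}\nabla^2L(z,\hat\theta)H^{-1}\nabla L(x,\hat\theta)$. The second yields $H^{-1}\nabla^2L(x,\hat\theta)H^{-1}\nabla L(z,\hat\theta)$. Their sum is exactly the asserted formula. The symmetry in $z$ and $x$ is a useful consistency check, since mixed partials commute.

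The main obstacle is the third-order term $\nabla^3 u[\partial_{\epsilon_z}\hat\theta]$, which does not appear in the statement. I will handle it the way the paper frames $\hat\theta$ in Lemma~\ref{L:1}: the minimizer is computed from the second-order Taylor model of the loss at $\hat\theta$. In that model $f_Z$ and each $L(t,\cdot)$ are replaced by their quadratic approximations, so $\nabla^2 f_Z$ is constant and third derivatives vanish. Alternatively, one can argue that this term is of lower order in the expansions used for Theorem~\ref{T:manyk}.

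Concretely, I would redo the computation with $u_\epsilon$ replaced by its quadratic model around $\hat\theta$. Then
\begin{equation*}
\hat\theta_{u_\epsilon} = \hat\theta - \Bigl(H+\sum_t\epsilon_t\nabla^2L(t,\hat\theta)\Bigr)^{-1}\sum_t\epsilon_t\nabla L(t,\hat\theta)
\end{equation*}
holds exactly, by Lemma~\ref{L:1} together with $\nabla f_Z(\hat\theta)=0$. Expanding the inverse as a Neumann series to second order in $\epsilon$ and reading off the coefficient of $\epsilon_z\epsilon_x$ reproduces the two terms above directly. This gives a clean derivation in which the third-order issue never arises.
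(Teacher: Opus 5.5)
Your final argument is the paper's proof. The paper also applies Lemma~\ref{L:1} at $\theta_0=\hat\theta$ to get $\hat\theta_u-\hat\theta=-[\nabla^2 u(\hat\theta)]^{-1}\nabla u(\hat\theta)$, with the Hessian frozen at $\hat\theta$, and differentiates twice at $\epsilon=0$ using $\partial(A^{-1})=-A^{-1}(\partial A)A^{-1}$ where you use a Neumann expansion. (For $x=z$ the second derivative is twice the coefficient of $\epsilon_z^2$, which is what the formula gives.)

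You are right that the exact argmin carries an extra term $-H^{-1}\nabla^3 f_Z(\hat\theta)[H^{-1}\nabla L(z,\hat\theta),H^{-1}\nabla L(x,\hat\theta)]$, which the paper discards silently through Lemma~\ref{L:1}. Drop your ``lower order'' alternative, though: this term is of the same order in $n$ as the retained ones, and it only cancels in the sum of Lemma~\ref{L:claim0} because it is linear in $\nabla L(x,\hat\theta)$ and $\sum_{x\in Z}\nabla L(x,\hat\theta)=n\nabla f_Z(\hat\theta)=0$.
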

(Proof on page~\pageref{proof:L:double-diff-theta}.)

\begin{lemma}\label{L:double-diff}
\begin{eqnarray*}
\frac{\partial^2}{\partial \epsilon_x \partial\epsilon_z} L(z',\hat\theta_{u_{0}})
&=&
-\big\langle  \frac{\partial}{\partial \epsilon_z}  \nabla L(z',\hat\theta_{u_0}) , \nabla L(x,\hat\theta_{u_0})\big\rangle + \\
    & &  \langle \nabla L(z',\hat\theta_{u_{0}}),  \nabla^2 L(x,\hat\theta_{u_{0}})H_{u_{0}}^{-1}\nabla L(z,\hat\theta_{u_{0}})\rangle  +\\
    & & \langle \nabla L(z',\hat\theta_{u_{0}}), \nabla^2 L(z,\hat\theta_{u_{0}}) H_{u_{0}}^{-1} \nabla L(x,\hat\theta_{u_{0}})\rangle
\end{eqnarray*}
\end{lemma}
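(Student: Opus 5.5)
We prove the identity of Lemma~\ref{L:double-diff} by differentiating the first-order formula of Lemma~\ref{L:basic} once more, now with respect to $\epsilon_x$, while treating every quantity evaluated at $\hat\theta_{u_\epsilon}$ as a function of $\epsilon$.

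The starting point is that, for general $\epsilon$ near $0$, the argument of Lemma~\ref{L:basic} (via Lemma~\ref{L:theta-diff} and the chain rule) gives
\begin{equation*}
\frac{\partial}{\partial\epsilon_z}L(z',\hat\theta_{u_\epsilon})
= -\nabla L(z',\hat\theta_{u_\epsilon})^T H_{u_\epsilon}^{-1}\nabla L(z,\hat\theta_{u_\epsilon}),
\end{equation*}
where $H_{u_\epsilon}=\nabla^2 u_\epsilon(\hat\theta_{u_\epsilon})$. We then apply the product rule in $\epsilon_x$ at $\epsilon=0$, which produces three terms: one from differentiating $\nabla L(z',\cdot)$, one from $H_{u_\epsilon}^{-1}$, and one from $\nabla L(z,\cdot)$.

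The first and third terms only require the chain rule together with Lemma~\ref{L:theta-diff}. For any $w$,
\begin{equation*}
\frac{\partial}{\partial\epsilon_x}\nabla L(w,\hat\theta_{u_\epsilon})\big|_0=-\nabla^2L(w,\hat\theta)H^{-1}\nabla L(x,\hat\theta).
\end{equation*}

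The middle term uses the matrix identity $\partial(H^{-1})=-H^{-1}(\partial H)H^{-1}$. Here we write
\begin{equation*}
\frac{\partial}{\partial\epsilon_x}H_{u_\epsilon}\big|_0=\nabla^2L(x,\hat\theta)+\nabla^3 f_Z(\hat\theta)\big[\partial_{\epsilon_x}\hat\theta\big].
\end{equation*}
The explicit part $\nabla^2L(x,\hat\theta)$ comes from the $\epsilon_x L(x,\theta)$ summand of $u_\epsilon$. The other part is the third-derivative contribution from the moving minimizer.

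Next we collect the terms using the form $\langle v,w\rangle=v^TH^{-1}w$. The contribution from differentiating $\nabla L(z,\cdot)$ gives $\langle\nabla L(z'),\nabla^2L(z)H^{-1}\nabla L(x)\rangle$, which is the third line of the statement. The explicit $\nabla^2 L(x)$ part of the Hessian derivative gives $+\langle\nabla L(z'),\nabla^2L(x)H^{-1}\nabla L(z)\rangle$, the second line; its sign flips twice, once from the leading minus and once from the inverse derivative. The remaining piece, from differentiating the $z'$-gradient, is grouped as $-\langle\partial_{\epsilon_z}\nabla L(z',\hat\theta_{u_0}),\nabla L(x)\rangle$.

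To justify this grouping, we use the symmetry of $\beta_f$ and of the Hessians, transposing so that the derivative attached to $z'$ pairs against $\nabla L(x)$. Lemma~\ref{L:double-diff-theta} helps confirm that the symmetrized structure in $(z,x)$ matches.

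The main obstacle is the bookkeeping of the third-order term $\nabla^3 f_Z[\partial_{\epsilon_x}\hat\theta]$ arising from the Hessian's dependence on $\hat\theta_{u_\epsilon}$. My first attempt would be to show that it is absorbed into the first line, or that it is neglected as in the quadratic-model setting of Lemma~\ref{L:1}. In that setting $f_Z$ is replaced by its second-order Taylor model, so $\nabla^3f_Z=0$. I would adopt this convention explicitly, since Lemma~\ref{L:double-diff-theta} already drops this term. I would then verify that the signs of the three terms match the statement.
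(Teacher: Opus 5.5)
Your argument is correct and is essentially the paper's. The paper applies the product rule to $\nabla L(z',\hat\theta_u)^T\frac{\partial}{\partial\epsilon_z}\hat\theta_u$ and inserts Lemma~\ref{L:double-diff-theta} for the mixed second derivative of $\hat\theta_u$; your Hessian-inverse and $\nabla L(z,\cdot)$ terms are that lemma re-derived inline. It then rewrites the $z'$-term as $-\langle \frac{\partial}{\partial\epsilon_z}\nabla L(z',\hat\theta_{u_0}),\nabla L(x,\hat\theta_{u_0})\rangle$ via equality of mixed partials, where you use a direct transposition.

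You are also right that the extra term $-\nabla L(z',\hat\theta)^T H^{-1}\nabla^3 f_Z(\hat\theta)[H^{-1}\nabla L(x,\hat\theta),H^{-1}\nabla L(z,\hat\theta)]$ cannot be absorbed into the first line, since that line involves only $\nabla^2 L(z',\hat\theta)$. It must therefore be dropped by the Newton-step/quadratic-model convention of Lemma~\ref{L:1}, which is exactly what the paper does implicitly through Lemma~\ref{L:double-diff-theta}.
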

(Proof on page~\pageref{proof:L:double-diff}.)
\begin{lemma}\label{L:claim0}
\begin{equation}\label{E:noerror}
  \frac{1}{n}\sum_{y\in Z}    \frac{\partial^2} {\partial \epsilon_y\partial \epsilon_z} L(z',\hat\theta_{u_{0}} )=
   \frac{\partial}{\partial\epsilon_z} L(z',\hat\theta_{u_{0}}) \;.
\end{equation}
\end{lemma}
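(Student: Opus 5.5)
The plan is to average the identity of Lemma~\ref{L:double-diff} over $x=y\in Z$ with weight $1/n$ and then collapse each of the three resulting terms using two facts. The first is the first-order optimality condition $\nabla f_Z(\hat\theta)=\frac1n\sum_{y\in Z}\nabla L(y,\hat\theta)=0$, valid since $\hat\theta=\hat\theta_{u_0}$ minimizes $f_Z$. The second is the definition $H=\nabla^2 f_Z(\hat\theta)=\frac1n\sum_{y\in Z}\nabla^2L(y,\hat\theta)$. All three terms in Lemma~\ref{L:double-diff} are linear in the $y$-dependent quantity ($\nabla L(y,\cdot)$ or $\nabla^2L(y,\cdot)$), because $\langle\cdot,\cdot\rangle=\beta_f$ is bilinear. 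The sum $\frac1n\sum_y$ can therefore be moved inside the bracket in each term.

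Concretely, the term-by-term steps are as follows.
\begin{itemize}
\item First term: $-\big\langle \frac{\partial}{\partial\epsilon_z}\nabla L(z',\hat\theta_{u_0}),\, \frac1n\sum_y\nabla L(y,\hat\theta)\big\rangle$. The right slot is $\nabla f_Z(\hat\theta)=0$, so this term vanishes. I would still record $\frac{\partial}{\partial\epsilon_z}\nabla L(z',\hat\theta_{u_0})=-\nabla^2L(z',\hat\theta)H^{-1}\nabla L(z,\hat\theta)$ from Lemma~\ref{L:theta-diff}, to make clear that the left slot is finite.
\item Third term: $\langle \nabla L(z',\hat\theta), \nabla^2L(z,\hat\theta)H^{-1}\frac1n\sum_y\nabla L(y,\hat\theta)\rangle$. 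This vanishes for the same reason.
\item Middle term: $\langle \nabla L(z'),\, (\frac1n\sum_y\nabla^2L(y))H^{-1}\nabla L(z)\rangle=\nabla L(z')^TH^{-1}HH^{-1}\nabla L(z)=\langle\nabla L(z'),\nabla L(z)\rangle$. This is the only survivor.
\end{itemize}
The final step is to identify this surviving pairing with $\frac{\partial}{\partial\epsilon_z}L(z',\hat\theta_{u_0})$ via Lemma~\ref{L:basic}, which yields \eqref{E:noerror}.

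The main obstacle is precisely this last identification, because it is a matter of sign bookkeeping. Lemma~\ref{L:basic} writes the first derivative as $-\langle\nabla L(z'),\nabla L(z)\rangle$. So if the signs in Lemma~\ref{L:double-diff} are taken literally, the average comes out as $\langle\nabla L(z'),\nabla L(z)\rangle=-\frac{\partial}{\partial\epsilon_z}L(z',\hat\theta_{u_0})$, not the stated right-hand side.

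I would first try to resolve this by rederiving the second derivative independently, using Lemma~\ref{L:double-diff-theta} and the chain rule $\frac{\partial^2}{\partial\epsilon_y\partial\epsilon_z}L(z',\hat\theta_u)=\nabla L(z')^T\partial^2_{yz}\hat\theta+(\partial_y\hat\theta)^T\nabla^2L(z')\,\partial_z\hat\theta$. As a cross-check I would use the scaling argument: $\epsilon_y=t/n$ for all $y$ turns $u$ into $(1+t)f_Z$, so $\hat\theta_{u}$ and the tangent $-H^{-1}\nabla L(z)$ rescale by $(1+t)^{-1}$. Differentiating in $t$ pins down the sign of the averaged second derivative unambiguously. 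I would then compare that sign with the convention for $\epsilon$ in $u_\epsilon$ and $\eta_M$ used in Theorem~\ref{T:manyk}. If the independent computation confirms the literal signs, then \eqref{E:noerror} cannot be proved as written. In that case the honest outcome is the corrected form $\frac1n\sum_y\frac{\partial^2}{\partial\epsilon_y\partial\epsilon_z}L(z',\hat\theta_{u_0})=\langle\nabla L(z'),\nabla L(z)\rangle=-\frac{\partial}{\partial\epsilon_z}L(z',\hat\theta_{u_0})$, together with a note on how the resulting sign flip affects Lemma~\ref{L:claim1} and the coefficient in Theorem~\ref{T:manyk}.
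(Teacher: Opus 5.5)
Your term-by-term computation is the same as the paper's. The paper also sums Lemma~\ref{L:double-diff} over $y$. It kills the first and third terms with $\sum_{y\in Z}\nabla L(y,\hat\theta)=n\nabla f_Z(\hat\theta)=0$, and reduces the middle term with $\sum_{y\in Z}\nabla^2L(y,\hat\theta)=nH$ to $n\langle\nabla L(z',\hat\theta),\nabla L(z,\hat\theta)\rangle$. It then writes ``Accordingly we have $\sum_{y}\frac{\partial^2}{\partial\epsilon_y\partial\epsilon_z}L(z',\hat\theta_{u_0})=n\frac{\partial}{\partial\epsilon_z}L(z',\hat\theta_{u_0})$''. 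That is exactly the identification you flag, and by Lemma~\ref{L:basic} it is off by a sign. The same slip recurs in the last line of the proof of Lemma~\ref{L:claim1}. Your diagnosis is therefore right. Under the paper's convention $u_\epsilon=f_Z+\sum_t\epsilon_tL(t,\cdot)$ the correct identity is
\[
\frac1n\sum_{y\in Z}\frac{\partial^2}{\partial\epsilon_y\partial\epsilon_z}L(z',\hat\theta_{u_0})
=\langle\nabla L(z',\hat\theta),\nabla L(z,\hat\theta)\rangle
=-\frac{\partial}{\partial\epsilon_z}L(z',\hat\theta_{u_0}),
\]
so \eqref{E:noerror} fails for every pair with nonzero influence. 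A concrete check: $L(z,\theta)=\frac12(\theta-y_z)^2$ with labels $0,1,2$ gives influence $1$ between the first and third points, but an average second derivative of $-1$.

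You do not need to hedge. Your scaling check is an exact disproof and is the stronger of your two cross-checks. One correction: for $\epsilon_y=t/n$ for all $y$, the function $u_\epsilon=(1+t)f_Z$ has the same minimizer $\hat\theta$, so $\hat\theta_{u_\epsilon}$ is not rescaled. Only $\nabla^2u_\epsilon(\hat\theta)=(1+t)H$ rescales, and with it the tangent $\frac{\partial}{\partial\epsilon_z}\hat\theta_{u_\epsilon}=-(1+t)^{-1}H^{-1}\nabla L(z,\hat\theta)$. Hence $g(t)=\frac{\partial}{\partial\epsilon_z}L(z',\hat\theta_{u_\epsilon})=(1+t)^{-1}g(0)$. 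Its derivative $g'(0)$, which is exactly the left side of \eqref{E:noerror}, equals $-g(0)$. This argument uses only implicit differentiation of $\nabla u_\epsilon(\hat\theta_{u_\epsilon})=0$.

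Your chain-rule route through Lemma~\ref{L:double-diff-theta} is weaker, because it inherits that lemma's frozen-Hessian derivation. That derivation drops the term $-H^{-1}\nabla^3f_Z(\hat\theta)\bigl[H^{-1}\nabla L(y,\hat\theta),H^{-1}\nabla L(z,\hat\theta)\bigr]$ from the exact $\frac{\partial^2}{\partial\epsilon_y\partial\epsilon_z}\hat\theta_{u_0}$. The omission does not change your conclusion: the term is linear in $\nabla L(y,\hat\theta)$, so it also averages to zero and the sign is robust.

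On the convention question, the answer is mixed. With $u_\epsilon=f_Z-\sum_t\epsilon_tL(t,\cdot)$, first derivatives flip sign while second derivatives do not, so \eqref{E:noerror} would hold literally. However, removal then requires $\eta_M=+\frac1n$ on $M$, which flips the increment in \eqref{E:local44}. Either way, carrying the correct sign through the proof of Theorem~\ref{T:manyk} gives the leading term $-\frac{ks}{n-2}\mathbb{E}[X_{u_0}^k]$, so averaged moments grow under removal.

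This agrees with a direct reading of the definition. The Hessian $\nabla^2u_{\eta_M}=\frac1n\sum_{t\notin M}\nabla^2L(t,\cdot)$ is on average about $\frac{n-s}{n}H$, which inflates influences at $\eta_M$ by roughly $\frac{n}{n-s}$. As a consequence, the averaging argument of Corollary~\ref{C:1} no longer produces a variance-decreasing $M$.
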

(Proof on page~\pageref{proof:L:claim0}.)
\begin{lemma}\label{L:claim1}
  For any fixed pair $z'\neq z$, $z,z'\in Z$ holds
\begin{equation}\label{E:error}
  \sum_{y\in Z, y\neq z,z'}   \left[ \frac{\partial^2} {\partial \epsilon_y\partial \epsilon_z} L(z',\hat\theta_{u_{0}} )
    \left(\frac{1}{n}\right)+O(n^{-2})\right]=
   \frac{\partial}{\partial\epsilon_z} L(z',\hat\theta_{u_{0}}) +O(n^{-1}) \;.
\end{equation}
\end{lemma}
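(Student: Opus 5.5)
The plan is to derive Lemma~\ref{L:claim1} directly from Lemma~\ref{L:claim0}. Both statements are about the same sum; Lemma~\ref{L:claim1} simply omits the two indices $y=z$ and $y=z'$ and carries an additive $O(n^{-2})$ error in each summand. So the proof has two steps. First, show that the two omitted diagonal terms contribute only $O(n^{-1})$ after the factor $1/n$ is applied. Second, show that the accumulated per-term errors are also $O(n^{-1})$.

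\emph{Step 1: split off the diagonal terms.} Fix $z\neq z'$ and write
\begin{equation*}
\sum_{y\in Z, y\neq z,z'} \frac{1}{n}\frac{\partial^2}{\partial\epsilon_y\partial\epsilon_z}L(z',\hat\theta_{u_0})
= \frac{1}{n}\sum_{y\in Z}\frac{\partial^2}{\partial\epsilon_y\partial\epsilon_z}L(z',\hat\theta_{u_0})
-\frac{1}{n}\Big[\frac{\partial^2}{\partial\epsilon_z^2}L(z',\hat\theta_{u_0})+\frac{\partial^2}{\partial\epsilon_{z'}\partial\epsilon_z}L(z',\hat\theta_{u_0})\Big].
\end{equation*}
By Lemma~\ref{L:claim0}, the full sum equals $\frac{\partial}{\partial\epsilon_z}L(z',\hat\theta_{u_0})$ exactly.

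\emph{Step 2: bound the two diagonal second derivatives.} Lemma~\ref{L:double-diff}, with $x=z$ and with $x=z'$, expresses each as a finite combination of three quantities: gradients $\nabla L(\cdot,\hat\theta)$, Hessians $\nabla^2L(\cdot,\hat\theta)$, and $H^{-1}$. The term $\frac{\partial}{\partial\epsilon_z}\nabla L(z',\hat\theta_{u_0})=-\nabla^2L(z',\hat\theta)H^{-1}\nabla L(z,\hat\theta)$ is obtained from Lemma~\ref{L:theta-diff} and the chain rule. So each diagonal term is a fixed expression in the data at the two points $z,z'$ and does not grow with $n$. I will make this explicit with a standing boundedness assumption, which is the same one implicit in the $O(\cdot)$ notation of Theorem~\ref{T:manyk}: the gradients and Hessians of $L$ at $\hat\theta$ are uniformly bounded, and $H$ is uniformly positive definite, so $\|H^{-1}\|=O(1)$. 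Under this assumption the bracket is $O(1)$, and multiplying by $1/n$ gives $O(n^{-1})$.

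\emph{Step 3: collect the error terms.} The remaining sum is $\sum_{y\neq z,z'}O(n^{-2})$. It has $n-2$ terms, and the $O(n^{-2})$ bound is uniform in $y$ for the same reason as in Step~2: it is the Taylor remainder from Equation~\eqref{E:local44}, whose coefficients are built from the same bounded quantities. The sum is therefore $O(n^{-1})$, and adding it to the result of Step~2 gives the claim.

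I expect the main obstacle to be the uniformity in Steps~2 and~3. Nothing stated earlier explicitly bounds $H^{-1}$, the per-point gradients, or the per-point Hessians independently of $n$. I would handle this by making the uniform-boundedness and strong-convexity hypothesis explicit, since it is exactly what the phrase ``$n$ sufficiently large'' in Theorem~\ref{T:manyk} relies on. It is worth checking that $H=\frac1n\sum_t\nabla^2L(t,\hat\theta)$ is an average, so $\|H^{-1}\|$ stays bounded under strong convexity of the individual losses.
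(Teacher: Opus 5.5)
Your decomposition is the paper's own: split off $y=z$ and $y=z'$, invoke Lemma~\ref{L:claim0} for the full sum, show the two diagonal terms are $O(n^{-1})$ after the factor $1/n$, and add up $n-2$ uniform $O(n^{-2})$ errors. Making the uniform bounds on gradients, Hessians and $H^{-1}$ explicit improves on the paper's ``we immediately observe''. The genuine gap is the step ``by Lemma~\ref{L:claim0}, the full sum equals $\frac{\partial}{\partial\epsilon_z}L(z',\hat\theta_{u_0})$''. That lemma has the wrong sign, and you inherit the error.

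You can see this by summing the three terms of Lemma~\ref{L:double-diff} (which you already use in Step~2) over all $y\in Z$. The first and third terms vanish because $\sum_{y}\nabla L(y,\hat\theta)=n\nabla f_Z(\hat\theta)=0$. The second gives $n\langle \nabla L(z',\hat\theta),\nabla L(z,\hat\theta)\rangle$, which by Lemma~\ref{L:basic} equals $-n\frac{\partial}{\partial\epsilon_z}L(z',\hat\theta_{u_0})$. The paper's proofs of Lemmas~\ref{L:claim0} and~\ref{L:claim1} both reach $\langle\nabla L(z'),\nabla L(z)\rangle$ and then identify it with $+\frac{\partial}{\partial\epsilon_z}L(z',\hat\theta_{u_0})$, which contradicts Lemma~\ref{L:basic}.

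This is not an artifact of the quadratic model behind Lemma~\ref{L:double-diff-theta}. At $\epsilon=0$, the operator $\sum_y\frac{\partial}{\partial\epsilon_y}$ is the derivative along the ray $\epsilon_t=c$ for all $t$. On that ray $u_\epsilon=(1+nc)f_Z$, so the minimizer stays at $\hat\theta$ and the Hessian is $(1+nc)H$. The implicit function theorem then gives $\frac{\partial}{\partial\epsilon_z}L(z',\hat\theta_{u_\epsilon})=-\frac{1}{1+nc}\nabla L(z',\hat\theta)^TH^{-1}\nabla L(z,\hat\theta)$, whose $c$-derivative at $c=0$ is $-n\frac{\partial}{\partial\epsilon_z}L(z',\hat\theta_{u_0})$.

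A concrete counterexample to the lemma as stated: take $\theta\in\mathbb{R}$ and $L(t,\theta)=\frac12(\theta-a_t)^2$ with $a_z=1$, $a_{z'}=-1$ and $a_y=0$ otherwise. Then $\hat\theta=0$, $H=1$ and $\frac{\partial}{\partial\epsilon_z}L(z',\hat\theta_{u_0})=1$. Each $\frac{\partial^2}{\partial\epsilon_y\partial\epsilon_z}L(z',\hat\theta_{u_0})$ with $y\neq z,z'$ equals $-1$. So the left side of~\eqref{E:error} is $-1+O(n^{-1})$, while the right side is $1+O(n^{-1})$.

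The statement as written therefore fails whenever the influence of $z$ on $z'$ is not itself $O(n^{-1})$. What your argument (like the paper's) actually proves is~\eqref{E:error} with $-\frac{\partial}{\partial\epsilon_z}L(z',\hat\theta_{u_0})$ on the right, and your proof is repaired simply by carrying that sign. The sign propagates: in Theorem~\ref{T:manyk} the leading term becomes $-\frac{ks}{n-2}\mathbb{E}[X^k_{u_0}]$. This matches the fact that $\nabla^2u_{\eta_M}$ is on average $\frac{n-s}{n}H$, so removing points inflates influences rather than shrinking them.
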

(Proof on page~\pageref{proof:L:claim1}.)
With the help of Theorem~\ref{T:manyk} we are now in position to
formulate our main result:
\begin{theorem}\label{T:variance}
Let $M\subset Z$ with $\vert M\vert=s$, where $s\le n_0$ and $n$ is
sufficiently large.  Let $X_{u_0}$ and $X_{u_{\eta_M}}$ be the random
variables expressing the influence between data-points $z,z'$ in $Z$
and $Z\setminus M$, respectively. Then we have
\begin{equation}
 \mathbb{V}[X_{u_0}]  - \frac{1}{\binom{n}{s}} \sum_{M\subset  Z\atop \vert M\vert = s} \mathbb{V}[X_{u_{\eta_M}}]  =
  \frac{2 s}{n-2}  \mathbb{V}[X_{u_0}] +O(n^{-2}) \;.
\end{equation}
\end{theorem}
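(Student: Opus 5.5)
We will derive Theorem~\ref{T:variance} from Theorem~\ref{T:manyk} applied with $k=1$ and $k=2$, writing $\mathbb{V}[X]=\mathbb{E}[X^2]-\mathbb{E}[X]^2$. Throughout, $\overline{A}$ denotes the average $\frac{1}{\binom{n}{s}}\sum_{|M|=s} A_M$ over all $M\subset Z$ with $|M|=s$.

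\emph{Second moment.} Theorem~\ref{T:manyk} with $k=2$ gives
\begin{equation*}
\overline{\mathbb{E}[X_{u_{\eta_M}}^2]} = \Bigl(1-\frac{2s}{n-2}\Bigr)\mathbb{E}[X_{u_0}^2] + O(n^{-2}).
\end{equation*}

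\emph{First moment.} Theorem~\ref{T:manyk} with $k=1$ gives $\overline{\mathbb{E}[X_{u_{\eta_M}}]} = (1-\frac{s}{n-2})\mathbb{E}[X_{u_0}] + O(n^{-2})$. The average variance, however, contains $\overline{\mathbb{E}[X_{u_{\eta_M}}]^2}$, the average of the squares rather than the square of the average. We therefore return to the $M$-indexed Taylor expansion in the proof of Theorem~\ref{T:manyk}, Equation~\eqref{E:local44}, and write
\begin{equation*}
\mathbb{E}[X_{u_{\eta_M}}] = \mathbb{E}[X_{u_0}] - \delta_M .
\end{equation*}
Here $\delta_M$ collects the first-order terms $\frac{1}{n}\sum_{y\in M}\partial^2_{\epsilon_y\epsilon_z}L(z',\hat\theta_{u_0})$, averaged over pairs, together with the reindexing effect from $\binom{n}{2}$ to $\binom{n-s}{2}$. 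Each summand is $O(1/n)$ and $s\le n_0$, so $\delta_M=O(n^{-1})$ uniformly in $M$. Then
\begin{equation*}
\mathbb{E}[X_{u_{\eta_M}}]^2=\mathbb{E}[X_{u_0}]^2-2\,\mathbb{E}[X_{u_0}]\,\delta_M+\delta_M^2 ,
\end{equation*}
with $\delta_M^2=O(n^{-2})$. Averaging over $M$ and using $\overline{\delta_M}=\frac{s}{n-2}\mathbb{E}[X_{u_0}]+O(n^{-2})$ from the $k=1$ case gives
\begin{equation*}
\overline{\mathbb{E}[X_{u_{\eta_M}}]^2}=\Bigl(1-\frac{2s}{n-2}\Bigr)\mathbb{E}[X_{u_0}]^2+O(n^{-2}).
\end{equation*}
This is the same contraction factor as for the second moment, and that coincidence is the point of the theorem.

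\emph{Combining.} Subtracting the two averages gives $\overline{\mathbb{V}[X_{u_{\eta_M}}]}=(1-\frac{2s}{n-2})\mathbb{V}[X_{u_0}]+O(n^{-2})$, which rearranges to the claim.

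\emph{Main obstacle.} The difficulty is justifying that $\overline{\delta_M^2}=O(n^{-2})$, i.e.\ that the fluctuations of $\delta_M$ across $M$ do not contribute at order $n^{-1}$. This requires a uniform bound on the mixed derivatives $\partial^2_{\epsilon_y\epsilon_z}L(z',\hat\theta_{u_0})$. We would obtain it from Lemma~\ref{L:double-diff}: the gradients $\nabla L$, the Hessians $\nabla^2 L$, and $H^{-1}$ are bounded, using convexity and the standing smoothness assumptions. We would also check that the quadratic Taylor remainder in Equation~\eqref{E:local44} stays $O(n^{-2})$ uniformly over $|M|=s\le n_0$. That uniformity is exactly what the hypothesis ``$n$ sufficiently large'' is meant to provide.
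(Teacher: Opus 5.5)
Your reduction is sound given Theorem~\ref{T:manyk}, but it handles the squared-mean term differently from the paper.

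The paper never tracks how $\mathbb{E}[X_{u_{\eta_M}}]$ moves with $M$. It observes instead that each such mean is itself $O(n^{-1})$. Because $\nabla f_{Z\setminus M}(\hat\theta_{u_{\eta_M}})=0$, the analogue of Lemma~\ref{L:basic} at $\eta_M$ shows that the sum of $\frac{\partial}{\partial\epsilon_z}L(z',\hat\theta_{u_{\eta_M}})$ over all \emph{ordered} pairs $z,z'\in Z\setminus M$, diagonal included, equals $-\bigl(\sum_{z'}\nabla L(z')\bigr)^T H_{u_{\eta_M}}^{-1}\bigl(\sum_{z}\nabla L(z)\bigr)=0$. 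Hence the sum over distinct pairs is minus half of an $O(n)$ diagonal sum, and dividing by $\binom{n-s}{2}$ gives $O(n^{-1})$; the case $M=\varnothing$ covers $X_{u_0}$. Consequently $\mathbb{E}[X_{u_0}]^2$ and the $M$-average of $\mathbb{E}[X_{u_{\eta_M}}]^2$ are both $O(n^{-2})$ and go straight into the error term. Also $\frac{2s}{n-2}\mathbb{E}[X_{u_0}^2]=\frac{2s}{n-2}\mathbb{V}[X_{u_0}]+O(n^{-3})$, so only the case $k=2$ of Theorem~\ref{T:manyk} is needed.

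You instead propagate the mean perturbatively. That costs you the case $k=1$ and a uniform bound $\delta_M=O(n^{-1})$, and the matching factor you highlight is just the first-order expansion of $(1-\frac{s}{n-2})^2$. What your route buys is generality: it transfers a moment statement of the form in Theorem~\ref{T:manyk} to the variance for any pair statistic, with no appeal to first-order optimality. The paper's route is shorter and yields the structural fact $\mathbb{V}[X_{u_0}]=\mathbb{E}[X_{u_0}^2]+O(n^{-2})$. That fact shows both sides of your identity for $\overline{\mathbb{E}[X_{u_{\eta_M}}]^2}$ are already $O(n^{-2})$ here. It also dissolves your ``main obstacle'', since $\delta_M$ is a difference of two $O(n^{-1})$ quantities. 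In any case $\overline{\delta_M^2}=O(n^{-2})$ is immediate once $\delta_M=O(n^{-1})$ uniformly.

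A caveat applies equally to your argument and the paper's, since both take Theorem~\ref{T:manyk} as given: the sign of its leading term comes from Lemma~\ref{L:claim0}. The proof of that lemma actually computes $\frac{1}{n}\sum_{y\in Z}\frac{\partial^2}{\partial\epsilon_y\partial\epsilon_z}L(z',\hat\theta_{u_0})=\langle\nabla L(z'),\nabla L(z)\rangle$. By Lemma~\ref{L:basic} this equals $-\frac{\partial}{\partial\epsilon_z}L(z',\hat\theta_{u_0})$, not $+\frac{\partial}{\partial\epsilon_z}L(z',\hat\theta_{u_0})$. As a check, the uniform perturbation $\epsilon_y\equiv\tau$ leaves $\hat\theta$ fixed and multiplies $X_{u_0}$ by $(1+n\tau)^{-1}$, so $\sum_{y}\frac{\partial}{\partial\epsilon_y}X_{u_0}=-nX_{u_0}$. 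Carried through, this reverses the sign of $\frac{ks}{n-2}$ and hence of $\frac{2s}{n-2}$. Your step from moments to variance is fine, but its conclusion inherits that upstream sign.
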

(Proof on page~\pageref{proof:T:variance}.)

The following corollary of Theorem~\ref{T:variance} is the cornerstone
of this paper. It establishes that a descent on the variance
$\mathbb{V}[X]$ is always possible, subject to certain data size
conditions.
\begin{corollary}\label{C:1}
Suppose we are given a data set $Z$ and a twice differentiable, convex
loss function $f_Z(\theta)=\frac{1}{n}\sum_z L(z,\theta)$ with a
minimum at $\hat\theta$ and potential $0<\mathbb{E}[X^k_{u_0}] $. Then
for $s\le n_0$ and sufficiently large $n$, there exists a subset
$M_0\subset Z$ with $\vert M_0\vert=s$ such that
\begin{equation}\label{E:exists}
    \frac{ 2 s}{n-2}  \mathbb{E}[X^k_{u_0}] + O(n^{-2})
    \le
    \mathbb{E}[X^k_{u_0}]   -\mathbb{E}[X^k_{u_{\eta_M}}].
\end{equation}
Furthermore
\begin{equation}\label{E:exists2}
    \frac{ 2 s}{n-2}  \mathbb{V}[X_{u_0}] + O(n^{-2})
    \le
    \mathbb{V}[X_{u_0}]   -\mathbb{V}[X_{u_{\eta_M}}].
\end{equation}
\end{corollary}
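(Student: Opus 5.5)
The plan is an averaging (pigeonhole) argument applied to the identities of Theorem~\ref{T:manyk} and Theorem~\ref{T:variance}. Put
\[
D(M)=\mathbb{E}[X^k_{u_0}]-\mathbb{E}[X^k_{u_{\eta_M}}],\qquad
W(M)=\mathbb{V}[X_{u_0}]-\mathbb{V}[X_{u_{\eta_M}}],
\]
for $M\subset Z$ with $\vert M\vert=s$. Since $\mathbb{E}[X^k_{u_0}]$ does not depend on $M$, Theorem~\ref{T:manyk} says exactly that the uniform average of $D(M)$ over all $\binom{n}{s}$ subsets is $\frac{ks}{n-2}\mathbb{E}[X^k_{u_0}]+O(n^{-2})$. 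A finite family of reals always has a member at least its arithmetic mean, so some $M_0$ satisfies $D(M_0)\ge \frac{ks}{n-2}\mathbb{E}[X^k_{u_0}]+O(n^{-2})$.

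To reach the form \eqref{E:exists}, with constant $2$, restrict to $k\ge 2$; the even moments are the relevant case. Then the hypothesis $\mathbb{E}[X^k_{u_0}]>0$ gives $\frac{ks}{n-2}\mathbb{E}[X^k_{u_0}]\ge\frac{2s}{n-2}\mathbb{E}[X^k_{u_0}]$. This is the only place positivity of the potential is used. For $k=1$ the displayed factor would be $s$ rather than $2s$, so I would state that case with the factor $ks$ instead.

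For \eqref{E:exists2} I apply the same averaging step to $W(M)$, this time using Theorem~\ref{T:variance}: the average of $W(M)$ equals $\frac{2s}{n-2}\mathbb{V}[X_{u_0}]+O(n^{-2})$. Hence some $M$ has $W(M)$ at least this value, and no sign condition is needed.

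The main obstacle is that the corollary, read literally, asks for one $M_0$ satisfying both inequalities, whereas the pigeonhole argument produces possibly different maximizers of $D$ and of $W$. I would handle this in two steps.
\begin{enumerate}
\item Either read the statement as two existence claims, or note that $W(M)$ is, up to $O(n^{-2})$, a linear combination of the first-order corrections to $\mathbb{E}[X^2]$ and $\mathbb{E}[X]^2$. These corrections come from the same Taylor coefficients used in the proof of Theorem~\ref{T:manyk}.
\item A maximizer of the combined quantity $W$ then gives \eqref{E:exists2}. For \eqref{E:exists} I simply take the maximizer of $D$.
\end{enumerate}
Finally, I would check that the $O(n^{-2})$ term is uniform in $M$. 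The averaging only needs the error on the average, which is what Theorem~\ref{T:manyk} provides, so no uniformity is actually required.
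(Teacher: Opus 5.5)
Your averaging argument is the paper's proof written directly instead of by contradiction. The paper assumes every $M$ violates \eqref{E:exists}, averages over all $\binom{n}{s}$ sets, and calls the outcome impossible in view of Theorem~\ref{T:manyk}. As a deduction from that theorem, yours is the cleaner version in three respects:
\begin{itemize}
\item you tie the $O(n^{-2})$ in the conclusion to the theorem's own error term;
\item you make explicit that replacing $ks$ by $2s$ uses $k\ge 2$ and $\mathbb{E}[X^k_{u_0}]>0$ (for $k=1$ the point is moot, since $\mathbb{E}[X_{u_0}]=O(n^{-1})$ makes both main terms $O(n^{-2})$);
\item you rightly observe that \eqref{E:exists} and \eqref{E:exists2} come with possibly different witnesses, which the paper's proof does not resolve either.
\end{itemize}
The genuine gap lies in the input that both arguments take on trust. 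Theorem~\ref{T:manyk}, and with it Theorem~\ref{T:variance}, has the wrong sign, so your step ``the average of $D(M)$ is $\frac{ks}{n-2}\mathbb{E}[X^k_{u_0}]+O(n^{-2})$'' is false.

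The slip is in Lemma~\ref{L:claim0}. Its proof correctly obtains $\sum_{y\in Z}\frac{\partial^2}{\partial\epsilon_y\partial\epsilon_z}L(z',\hat\theta_{u_0})=n\langle\nabla L(z',\hat\theta),\nabla L(z,\hat\theta)\rangle$. By Lemma~\ref{L:basic}, however, this is $-n\frac{\partial}{\partial\epsilon_z}L(z',\hat\theta_{u_0})$, not $+n\frac{\partial}{\partial\epsilon_z}L(z',\hat\theta_{u_0})$; the same slip recurs in the last line of the proof of Lemma~\ref{L:claim1}. You can confirm the sign independently. Setting every $\epsilon_t=c$ gives $u=(1+nc)f_Z$, so the minimizer is unchanged and the Hessian is multiplied by $1+nc$. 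Hence $\frac{\partial}{\partial\epsilon_z}L(z',\hat\theta_u)$ is multiplied by $(1+nc)^{-1}$, and differentiating at $c=0$ gives $\sum_y\frac{\partial^2}{\partial\epsilon_y\partial\epsilon_z}L(z',\hat\theta_{u_0})=-n\frac{\partial}{\partial\epsilon_z}L(z',\hat\theta_{u_0})$.

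Carried through the proof of Theorem~\ref{T:manyk}, the corrected sign puts $-\frac{ks}{n-2}\mathbb{E}[X^k_{u_0}]+O(n^{-2})$ on the right-hand side. So on average removal inflates the moments, to first order exactly by the factor $(n/(n-s))^k$ coming from $\nabla^2 u_{\eta_M}=\frac{n-s}{n}\nabla^2 f_{Z\setminus M}$. If you normalize by the Hessian of $f_{Z\setminus M}$ instead, the first-order term vanishes. Either way, averaging yields no descent, and the corollary is in fact false.

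For a concrete counterexample, take $L(z,\theta)=\frac12(\theta-y)^2$ with $n/2$ labels $y=1$ and $n/2$ labels $y=-1$. Then $X_{u_0}=\pm1$, $\mathbb{E}[X^2_{u_0}]=1$ and $\mathbb{V}[X_{u_0}]=1-(n-1)^{-2}$. For every one-point set $M$, however, both $\mathbb{E}[X^2_{u_{\eta_M}}]$ and $\mathbb{V}[X_{u_{\eta_M}}]$ equal $1+\frac{2}{n}+O(n^{-2})$, or $1+O(n^{-2})$ under the $f_{Z\setminus M}$ normalization. Hence \eqref{E:exists} with $k=2$ and \eqref{E:exists2} both fail for $s=1$ and large $n$. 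No reorganization of the averaging step can rescue the statement as written.
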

(Proof on page~\pageref{proof:C:1}.)

\section{Applications}\label{S:appl}

In this section we present two proof-of-concept case studies that put
the following two conjectures to the test:\\
(a) variance encapsulates data heterogeneity and inversely correlated to test accuracy, and\\
(b) variance-based removal of sample points causes a drop in variance and an increase in test accuracy.

\begin{figure}[ht]
\centerline{\includegraphics[width=0.5\linewidth]{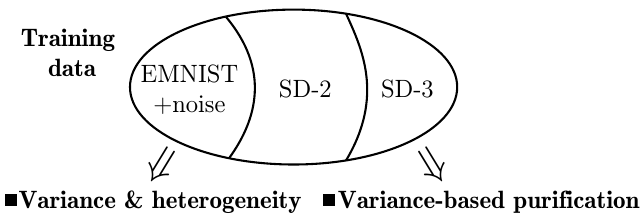}}
\caption{Outline: three classes of data each analyzed with respect to
  variance, heterogeneity, test accuracy and subjected to
  variance-based purification. SD-2 and SD-3 denote synthetic data
  containing two and three distinct distributions, respectively.}
\label{fig:experiment_overview}
\end{figure}

As depicted in Figure~\ref{fig:experiment_overview}, our experiments
involve three classes of data: a subset of the EMNIST
data~\cite{Cohen:17} and two varieties of a synthetic data set. For
the EMNIST data, samples of given sizes containing two distribution
are constructed through mislabeling a certain fraction~$r$ of the
training data. (The two distributions being represented by the
correctly labeled images and the incorrectly labeled images.) For the
synthetic data, denoted by SD-$K$ and described below, we synthesize
training data of prescribed sizes containing a specified number $K$ of
distributions at prescribed ratios.
Each data set is then probed using the following experiments: (1)
tracking of the variance $\mathbb{V}[X]$ and test accuracy as a
function of the composition ratio of the distributions, and (2)
measurements of the variance~$\mathbb{V}[X]$ and testing accuracy
under variance-based sample point removals.
We note that while high accuracy can always be achieved on the
training set by increasing model complexity or extending the number of
training epochs, high accuracy on the independent test set reflects
the model’s true predictive capability. Test accuracy thus serves as a
reliable benchmark for assessing the predictive power and
generalization ability of a trained model.
In each case, multinomial logistic regression (MLR) was employed using
the PyTorch~\cite{pytorch:19} library and its ADAM optimizer with
parameters as detailed below. We remark that these are preliminary
experiments meant to illustrate our framework, and refer to
Section~\ref{sec:discussion} for a discussion of implications, use of
deep learning architectures, and design of engineering principles and
algorithms targeting scaling and generalization.

\textbf{Data description.} For the EMNIST data~\cite{Cohen:17} we used
all~10 digits, each represented by~$10^3$ data points for a total
sample size of~$|Z|=10^4$ distributed evenly for each digit.
The synthetic data sets are constructed using a family of functions
defined in the following manner. Each function is of the form
$\phi^\ell \colon E^{10} \longrightarrow E$ for the alphabet $E = \{R,
D, B, N\}$, and is defined in terms of positional real-valued weights
$q^\ell = (q^\ell_i)_{i=1}^{10}$ (i.e., feature weights), alphabet
weights $w^\ell = (w^\ell_i)_{i\in E}$, and thresholds $a^\ell =
(a^\ell_i)_{i=1,2,3}$ by
\begin{equation}
\phi^\ell(x) =
\begin{cases}
R, & \text{if }   \phantom{a_1<}   C(x) \le a_1\;, \\
D, & \text{if } a_1 < C(x) \le a_2\;, \\
B, & \text{if } a_2 < C(x) \le a_3 \;, \\
N, & \text{if } a_3 < C(x) \;.
\end{cases}
\end{equation}
where $C(x)=\sum_{i=1}^{10}q_i^\ell w^\ell(x_i)$.  A sample $Z$ of
size $N = N_1 + N_2 + \cdots N_K$ is constructed as the union of sets
whose $N_i$ elements are pairs $(x,\phi^i(x))$ where the feature
vectors $x$ are sampled from $U(E^{10})$.
The specific distribution parameters are given as follows:
\begin{verbatim}
  label_weights_1: R:  1.0, D: -1.0, N:  0.3, B: -0.3
  label_weights_2: R: -0.4, D:  1.0, N: -0.8, B:  0.6
  label_weights_3: R:  1.2, D:  0.4, N: -0.2, B: -0.8

  feature_weights_1: [  1,  -1,  1, -1,   1,   -1,  1,   -1,    1,   -1]
  feature_weights_2: [0.5, 0.5, -1,  1,   0, -0.5,  1,   -1,  0.5, -0.5]
  feature_weights_3: [  1, 0.5,  1,  1, 0.5, -0.5, -1, -1.5, -0.5,    1]

  threshold_1: a_3=2.0, a_2=0.0, a_1=-2.0
  threshold_2: a_3=1.5, a_2=0.0, a_1=-1.5
  threshold_3: a_3=2.0, a_2=0.0, a_1=-2.5
\end{verbatim}

\subsection{EMNIST image data}
\label{sec:image_data}

We provide next our findings about relating variance with data
heterogeneity and test accuracy as well as the effect of
variance-based purification. We display variance and predication
accuracy relation as a combination of the mixing ratio of correctly
and incorrectly labeled EMNIST image data, and the effect of variance
based removal of certain training points.
In Figure~\ref{fig:emnist_potential_accuracy_errorrate}, we showcase
variance~$\mathbb{V}[X]$ and accuracy as functions of the error
rate~$r$.
\begin{figure}[ht!]
\centerline{\includegraphics[width=0.75\linewidth]{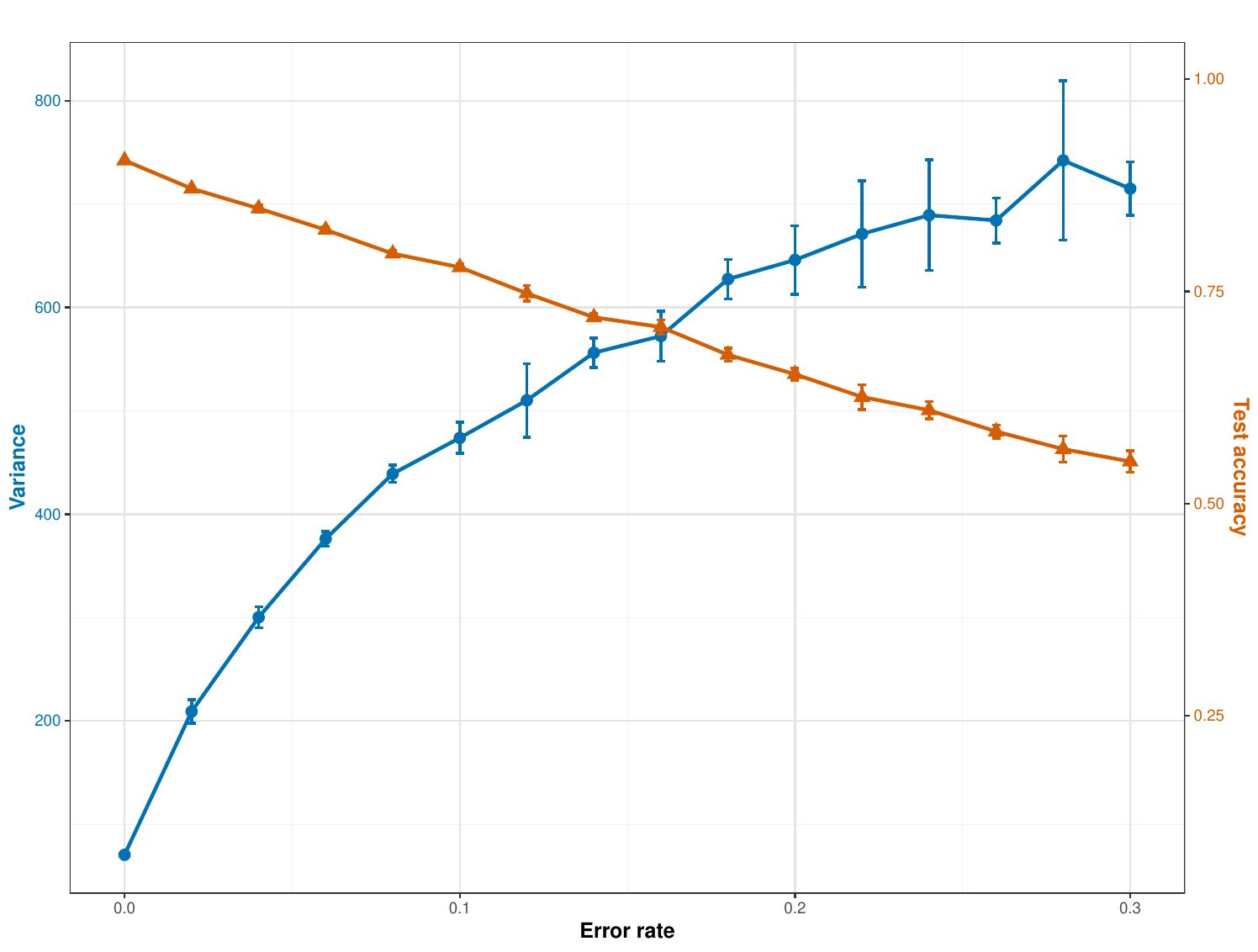}}
\caption{Variance $\mathbb{V}[X]$ (blue curve) and prediction accuracy
  (orange curve) for EMNIST data as a function of the error rate~$r$
  applied to training data. All~10 digits were used, with each
  digit~$0$ through~$9$ represented by~$10^3$ images to obtain~$|Z|=
  10^4$. We train an MLP model which takes all $28 \times 28$ gray
  scale pixels as features, with learning rate~$10^{-3}$ and batch
  size $64$, and use the trained model to predict on a disjoint test
  set of size $10^4$ sampled from EMNIST data.  The horizontal axis
  gives the error rate, and the left (resp. right) vertical axis gives
  the variance~$\mathbb{V}[X]$ (resp. test accuracy). Filled
  circles/triangles show expectations, while bars show standard
  deviations across the~5 replications generated for each error
  rate~$r$.
Note that the error rates for test accuracy are small and that nearly
all error bars are contained inside the triangle showing the
expectation.  }
\label{fig:emnist_potential_accuracy_errorrate}
\end{figure}

We proceed by illustrating how the variance $\mathbb{V}[X]$ can be
used to improve prediction accuracy through elimination of sample
points under MLR. For the case shown in
Figure~\ref{fig:mnist_potential_accuracy_misclassification_purification}
we again used EMNIST data, but this time with $|Z|=1200$ images
composed of $n_4=600$ digits $'4'$ and $n_8 = 600$ digits $'8'$. The
sample $Z$ was split into a training set $A_1$ with $|A_1| = 600$,
with each digit evenly distributed, and a test set $A_2$ with $|A_2| =
1200 - |A_1| = 600$, and for the training set $A_1$ we randomly
mislabeled a fraction~$r=0.30$ of the points.  Finally, after initial
training we measure the initial variance~$\mathbb{V}[X]$ and testing
accuracy, corresponding to the values for iteration~0 in
Figure~\ref{fig:mnist_potential_accuracy_misclassification_purification}.

Next we iterated the following procedure: for each $z \in A_1 = A_1^0$
train the MLR model on~$A^0_{1,-z} = A_1^0 \setminus {z}$, aka
leave-one-out (LOO), and record test accuracy. Following this, measure
\begin{equation*}
  \delta(z, A^0_1) = \mathbb{V}[A^0_1] - \mathbb{V}[A^0_{1,-z}] \;,
\end{equation*}
and then remove the $k=10$ points of $A^0_1$ with the largest value
of~$\delta$ to obtain $A_1^1$, concluding iteration~1. The MLR model
is then retrained on $A_1^1$, $k=10$ points are removed
mapping~$A_1^1$ into~$A_1^2$, and so on.

Figure~\ref{fig:mnist_potential_accuracy_misclassification_purification},
shows that variance drops quickly until about iteration~4 while the
test accuracy steadily increases for about~$20$ iterations which
corresponds to removal of~$200$ sample points from~$A_1^0$ (shown by
vertical dashed line). This ``purification'' procedure identifies a
subset $A_1' \subset A_1$ that allows for a significantly higher test
accuracy despite~$|A_1'| < |A_1|$.
\begin{figure}[ht!]
\centerline{\includegraphics[width=0.75\linewidth]{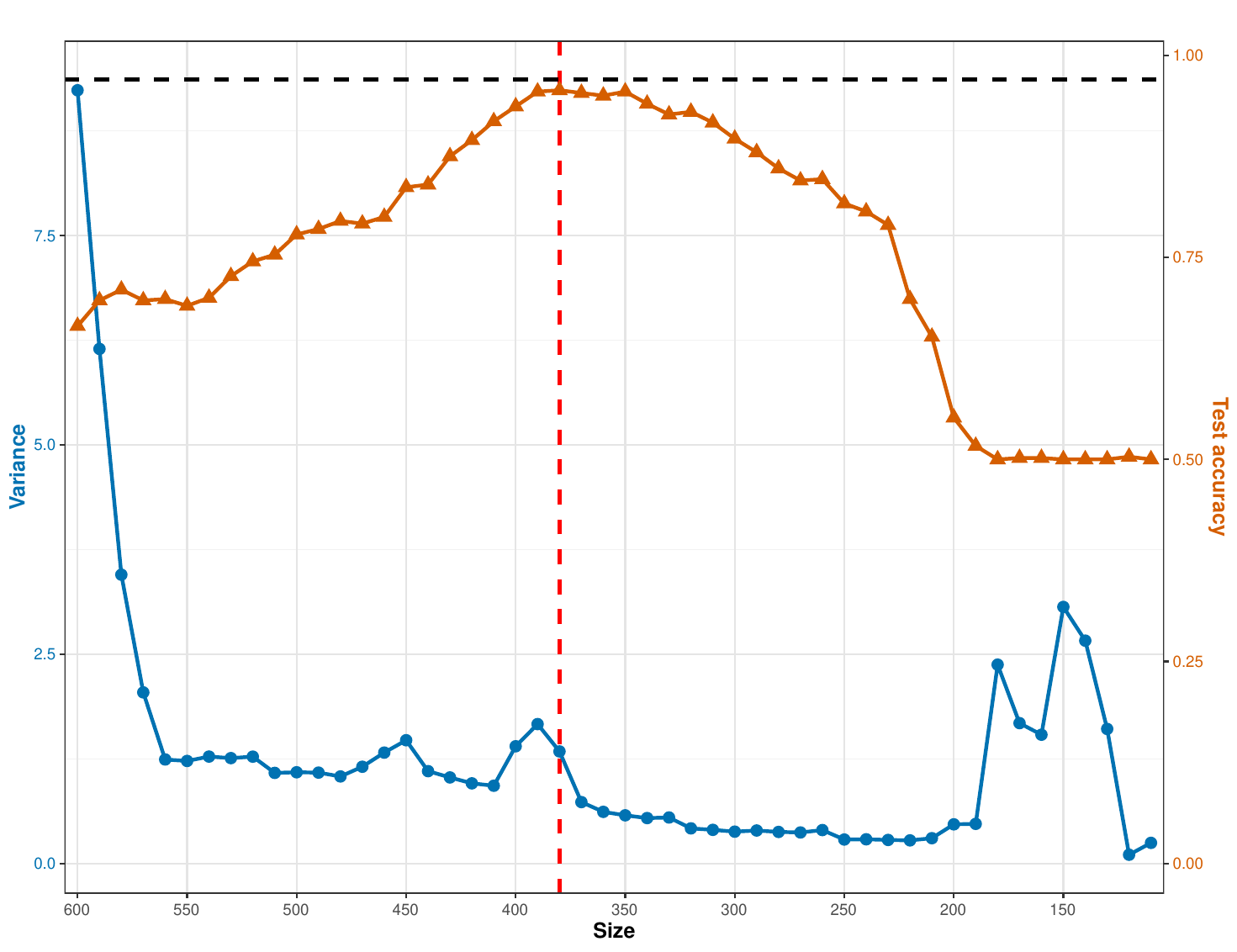}}
\caption{The evolution of training set variance $\mathbb{V}[X]$ (blue
  curve) and test accuracy (orange curve) over~20 purification
  iterations. Here a training set $T$ consists of 600 EMNIST samples
  digits '4' and '8' (evenly distributed) with an error
  rate~$r=0.30$. An independent test set~$S$ contains~600 correctly
  labeled EMNIST samples with the same digit distributions
  as~$T$. Training was conducted using MRL with batch size~64 and
  learning rate~$10^{-3}$. For reference, we trained a model on a
  single distribution training set $T_0$ of size $600$, evaluated on
  $S$, obtaining a test accuracy of $0.97$ at $r=0$, shown as the
  dashed black line. The maximal accuracy reached during purification
  was~$0.957$ which occurred when~$220$ data points had been removed,
  shown at the red dashed line.}
\label{fig:mnist_potential_accuracy_misclassification_purification}
\end{figure}

\subsection{Synthetic data - two distributions}
\label{sec:voter2_data}

Using synthetic data induced by two distributions and a total sample
size~$|Z|=600$ and MLR, Figure~\ref{fig:mix_potential_accuracy}
indicates a clear pattern: data heterogeneity negatively impacts
prediction accuracy, and the variance, $\mathbb{V}[X]$, correlates
positively with heterogeneity. Here accuracy reaches a minimum near
the~50/50 mix, at which point the potential reaches its maximal value.
\begin{figure}[ht!]
\centerline{\includegraphics[width=0.75\linewidth]{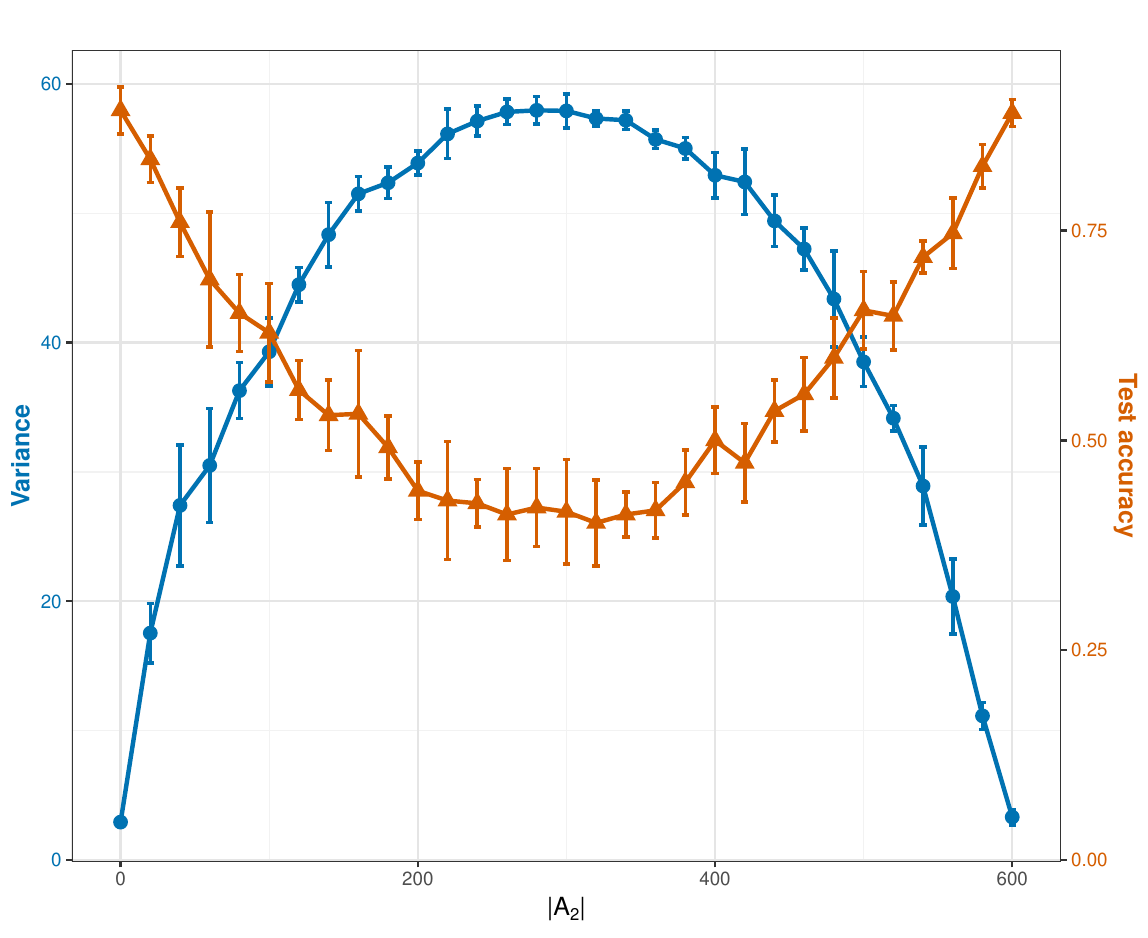}}
\caption{Variance $\mathbb{V}[X]$ and test accuracy as functions of
  mixed synthetic data (SD-2) with two distributions ($A_1$, $A_2$)
  and~$|Z| = 600$ using MLR. The horizontal axis shows $|A_2|$ with
  $|A_1|=|Z| - |A_2|$.  The left vertical axis shows test variance
  (blue curve) while the right axis shows the test accuracy (orange
  curve). $Z$ is randomly split into training and test sets in an 8:2
  ratio. The test accuracy is computed by training the model on the
  training set and evaluating it on the test set.  Filled
  circles/triangles give the expectations while error bars show
  standard deviations; for each composition~$k=5$ replications were
  generated. For the MLR training batch size was~$64$ and learning
  rate was~$10^{-3}$.}
\label{fig:mix_potential_accuracy}
\end{figure}

Figure~\ref{fig:voter_2_purification} represents the analogue of
Figure~\ref{fig:mnist_potential_accuracy_misclassification_purification}
which shows the variance-based, iterated purification using
Leave-One-Out under MLR as detailed in the previous section for a
sample~$Z$ consisting of $10^3$ points. Note that at iteration~100,
all data points would have been removed. One may speculate regarding
the relation between the maximal test accuracy and the inflection
point in the variance.
\begin{figure}[ht!]
\centerline{\includegraphics[width=0.75\linewidth]{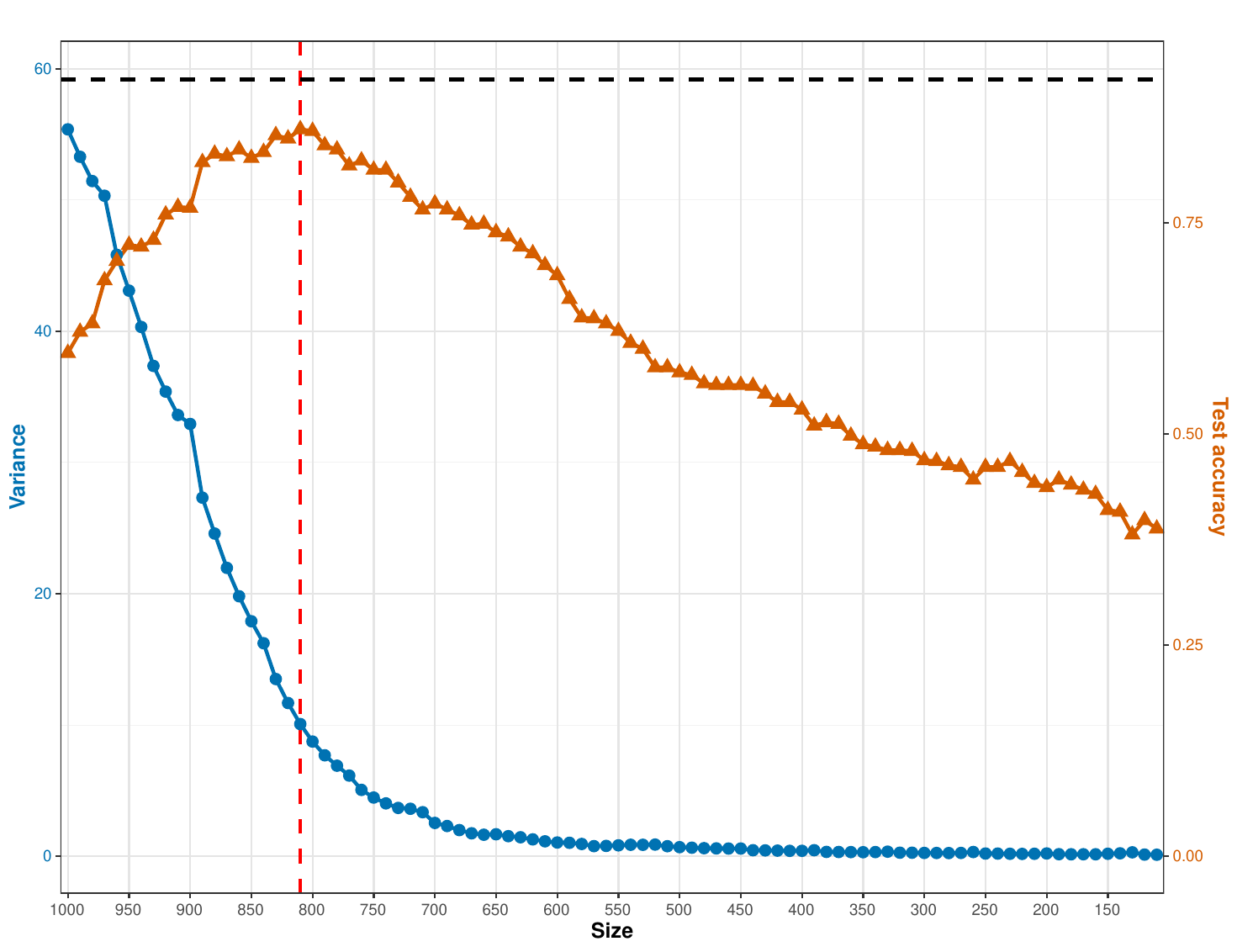}}
\caption{Evolution of training set variance $\mathbb{V}[X]$ (blue
  curve) and test accuracy (orange curve) over 20 purification steps
  using synthetic data SD-2 and MRL. Here the SD-2 training set $Z$ is
  composed of distributions $A_1$ and $A_2$ with $|A_1| = 700$ and
  $|A_2| = 300$; the test set $S$ consists of $1000$ data points
  following the distribution $A_1$.
Training was conducted using MRL with batch size~64 and learning
rate~$0.001$. For reference, we trained a model on a set $Z_0$
consisting of $10^3$ data points following distribution $A_1$ and
evaluated on~$S$ obtaining a test accuracy is~0.94 (black dashed
line). During purification, the maximal test accuracy was measured to
be~$0.86$ occurring after removal of 190 data points (red dashed
line).}
\label{fig:voter_2_purification}
\end{figure}

\subsection{Synthetic data - three distributions}

It is natural to ask how the observations from the previous two
sections translate to a sample where three or more distributions are
present. Referring to Figure~\ref{fig:synthetic_1000_3_mix_variance}
we illustrate the variance~$\mathbb{V}[X]$ under MLR for a
3-distribution synthetic data where the sample $Z$ is fixed at size
$600$, and where the three distributions are denoted by $A_1$, $A_2$,
and $A_3$.  The projections onto the planes $|A_2|=0$ and $|A_3|=0$
clearly match the structure of the variance shown in
Figure~\ref{fig:mix_potential_accuracy}. It is also appears that
variance captures the degree of heterogeneity being larger near the
``diagonal'' where $|A_1| = |A_2| = |A_3|$.

Similarly, in Figure~\ref{fig:voter_data_700_150_150_purification} we
show variance-based LOO purification for a~700/150/150
mixed~3-distribution synthetic data sample $Z$ using MLR. Again we
observe a steady increase in test accuracy and a steady drop in
variance $\mathbb{V}[X]$ for about~$20$ iterations. Here test accuracy
ranges from about~$0.65$ for the initial sample~$Z$ to almost~$0.85$
upon targeted removal of about 20\% of the sample points. One also
notices that test accuracy peaks near the inflection point of the
variance.

\begin{figure}[ht!]
\framebox{\includegraphics[height=136bp]{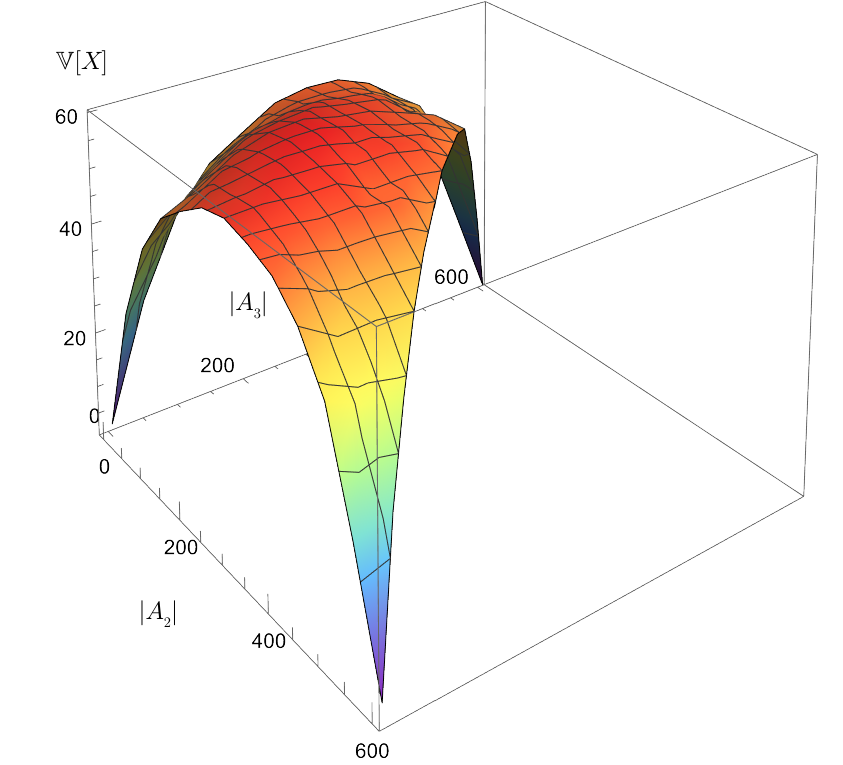}}\quad
\framebox{\includegraphics[height=136bp]{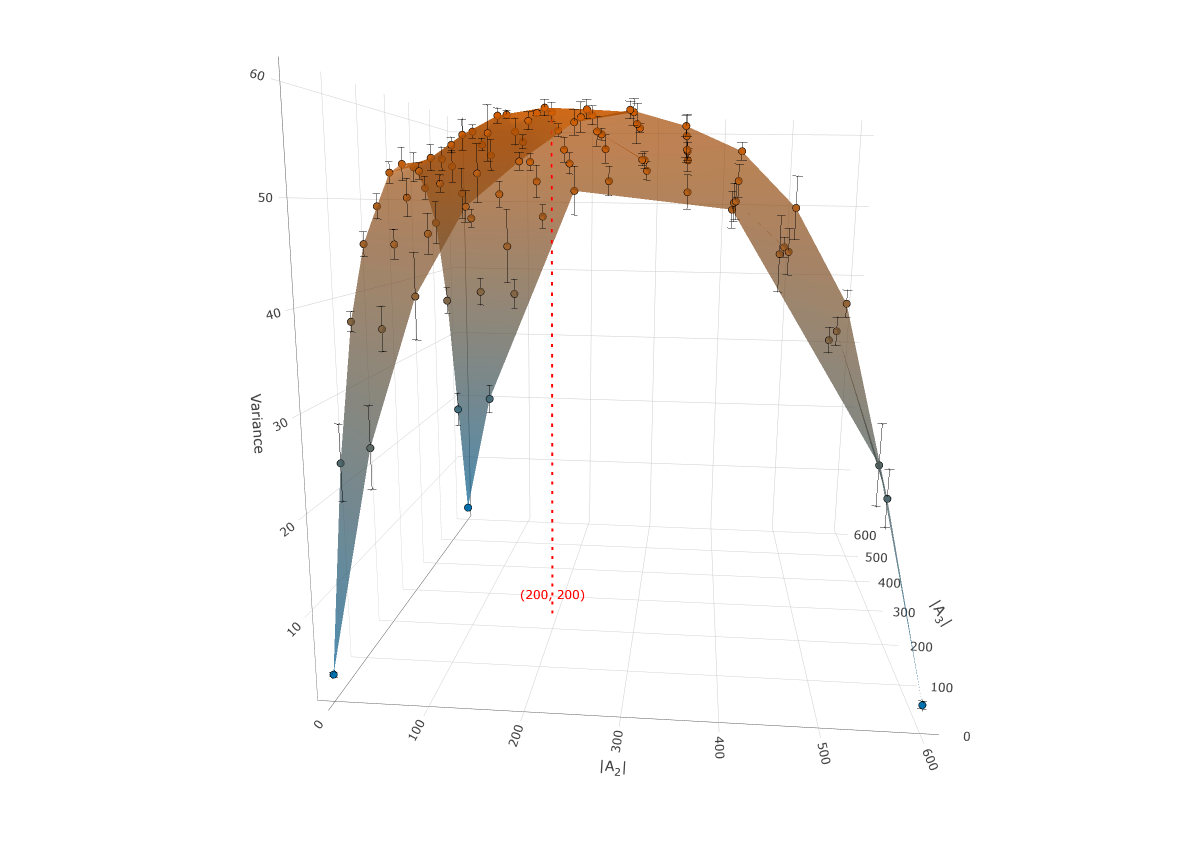}}
\caption{Variance $\mathbb{V}[X]$ as a function of sample composition
  for SD-3 where $|Z| = |A_1| + |A_2| + |A_3| = 600$. Variance
  ($z$-axis) is shown as a function of $(|A_2|, |A_3|)$ where $|A_2|$
  and $|A_3|$ are shown on the $x$- and $y$-axis, respectively. Here
  MLR was used with the synthetic SD-3 data. On the right, filled
  circles represent expectations and error bars encode standard
  deviations. Each data point was generated using $k=5$
  replications. For the MLR training the batch size was~$64$ and the
  learning rate was~$0.001$. The variance peaks at $|A_1| = |A_2|=
  |A_3|=200$ as indicated by the red dashed line on the right.  }
\label{fig:synthetic_1000_3_mix_variance}
\end{figure}

\begin{figure}[ht!]
\centerline{\includegraphics[width=0.75\linewidth]{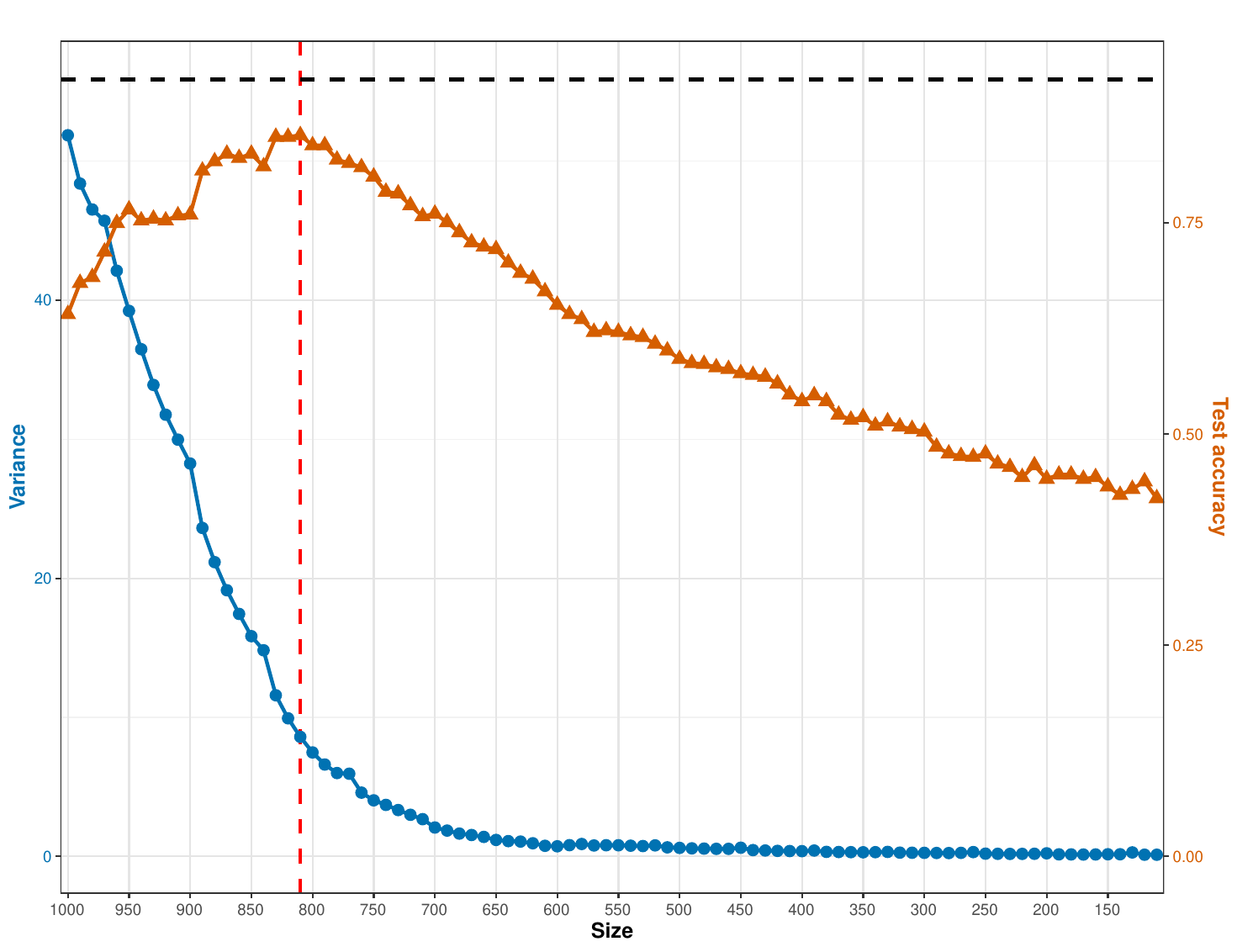}}
\caption{The evolution of training set variance $\mathbb{V}[X]$ (blue
  curve) and test accuracy (orange curve) over 20 purification steps
  using synthetic SD-3 data. Here~$Z$ is composed of distributions
  $A_1$, $A_2$ and $A_3$ with $|A_1| = 700$, $|A_2| = 150$ and
  $|A_3|=150$. The test set consists of~$10^3$ data points following
  the distribution~$A_1$. Training was conducted using MRL with batch
  size~$64$ and learning rate~$10^{-3}$. For reference, we trained a
  model $Z_0$ consisting of~$10^3$ data points following the
  distribution $A_1$ and evaluated it on~$S$, obtaining
  accuracy~$0.94$ as shown by the black dashed line. The maximal test
  accuracy observed during purification was $0.86$ which occurred
  after removal of~190 data points (red dashed line).}
\label{fig:voter_data_700_150_150_purification}
\end{figure}

\section{Discussion}
\label{sec:discussion}
In this paper we have introduced a system-level random variable $X$
representing the influences among all training data points. Its basic
properties allow characterization and manipulation of ensemble
properties, supporting the goal of improving prediction accuracy.
This perspective has immediate implications: random variables are
understood via their moments which become central to our framework.
The question then becomes: can we 
insights on moments to deeper insights into the structure of training
data?
Assuming convexity, Theorems~\ref{T:manyk} and~\ref{T:variance} show
that the second centered moment $\mathbb{V}[X]$ can support a form of
data purification in which sub-samples of data are produced that
reduce variance.  Theorem~\ref{T:variance} and its corollary are
tantamount to an existence proof of a purification algorithm,
immediately raising two questions: (a) how does variance relate to
data heterogeneity, and (b) can we construct efficient variance-based
algorithms that produce subsets of data on which prediction accuracy
is improved?

To ground our statements and results, we need to make precise what we
mean by data heterogeneity.  Here heterogeneity is relative to the
context within which the set data $Z$ is analyzed: it depends on the
choice of machine learning architecture and loss function, with a loss
function implying that training points are of the form~$z=(x,y)$.
This context determines what we can infer from the data, and
determines which distributions we can recognize.

Suppose we fix the context and that~$\ell$ different distributions are
present in the data where distributions are represented by
functions~$f_i\colon X \rightarrow Y$.
heterogeneity is tantamount to the entropy
$$
E(Z)=-\sum_{i=1}^\ell p_i\log(p_i)\;,
$$
where~$p_i$ denotes the fraction of data consistent with
function~$f_i$.  Clearly, if only one function is present we have
$E(Z)=0$ which is minimal, and one may call the system pure. On the
other hand,~$E(Z)$ is maximal with value $\log (\ell)$ if for given
$\ell \ge 2$ the data points are uniformly distributed among the
$\ell$ blocks.  Random data represents the extreme case where each
data point represents its own distribution and heterogeneity is
maximized at~$\log(|Z|)$.

Given a context and an unknown training set $Z$ we cannot directly
infer its heterogeneity, which is why we introduced the variance which
can be used as a proxy for this.  By design, we knew the composition
of the data in all our experiments in Section~\ref{S:appl}, which
allowed us to showcase the relation between variance and
heterogeneity, demonstrated in
Figures~\ref{fig:emnist_potential_accuracy_errorrate},~\ref{fig:mix_potential_accuracy},
and~\ref{fig:synthetic_1000_3_mix_variance} where~$\mathbb{V}[X]$
faithfully captures data
heterogeneity. Figure~\ref{fig:mnist_potential_accuracy_misclassification_purification}
shows that removing ``noise'' from the image data leads to a drop
in~$\mathbb{V}[X]$.  Obviously, this also decreases entropy by
increasing the fraction of correctly labeled image data.  In the case
of three distributions, Figures~\ref{fig:mix_potential_accuracy}
and~\ref{fig:synthetic_1000_3_mix_variance} show that~$\mathbb{V}[X]$
assumes its maximum when all functions have equal support within the
data, matching the behavior of entropy.

It is not within the scope of this work to present a polished,
computational efficient framework for variance-based optimization of
prediction accuracy.  However, our proof-of-concept examples have
demonstrated that variance-based purification via Corollary~\ref{C:1}
can lead to significant improvements of test accuracy. This suggests
that variance, which is a quantity that can be determined and/or
estimated from the training data, supports a principled identification
of ``outliers''.
As can bee seen in all the experiments, variance-based purification
goes through a regime of increased test accuracy, but eventually
transitions into a second regime where additional removal has a
negative impact. Here it is important to note that as more and more
data points are removed (the case of large $|M|$), we no longer
have~$O(n^{-2})$ error terms as stipulated in
Theorem~\ref{T:variance}. From this point onward, finite size effects
render variance less and less informative.

The experiments presented here were based on leave-one-out (LOO) with
retraining, viable mainly for simpler models and conservatively sized
training data. In ongoing work, we are exploring the use of influence
(see Lemma~\ref{L:basic} and~\cite{Koh-Liang:2018,Cook-Weisberg:1980})
as a means to identify the sets $M$ in a computationally cheaper
manner. Initial results indicate that this approach works well,
including for architectures such as deep neural networks where
convexity assumptions fail~\cite{LeCun:15}.
Preliminary experiments using DL architectures and variance-based
construction of the sets~$M$ (not in this paper) based on retraining
show patterns quite similar to those observed in
Figures~\ref{fig:voter_2_purification}
and~\ref{fig:voter_data_700_150_150_purification}.  Since variance as
a concept does not depend on convexity, it appears to represent a tool
to purify data.  \cite{basu2020influence, epifano2023revisiting,
  hammoudeh2024training, li2024influence} show that an analytical
approach based on convexity is not applicable. The development of a
framework suitable for DL-architectures is work in progress.

In ongoing work we are exploring the engineering of efficient
variance-based purification algorithms and stopping criteria for the
removal process. Figure~\ref{fig:concept} shows our vision for how
this framework could advance supervised learning. Sample data is
subjected to initial training where the purpose is purification rather
than prediction. Following this, the data is partitioned into pure
blocks followed by separate, local training on each pure block to
obtain the sub-models using conventional architectures.  Since the
blocks are more homogeneous, one may be able to use less complex
architectures and still obtain increased accuracy. In this framework,
prediction follows the use of a classifier that routes new data to the
appropriate sub-model(s).

Finally we are seeking to formally understand how~$\mathbb{V}[X]$ and
entropy relate which would bridge this work and the
information-theoretic framework developed in~\cite{Bura:24} for RNA
sequence clustering. The fact that $\mathbb{V}[X]$ mimics the behavior
of entropy is likely not a coincidence. Analysis of higher order
moments such as kurtosis is likely to yield additional insight that
can support data analysis.

\section{Proofs}
\label{sec:proofs}

\subsection{Proofs of basic facts}

\begin{proof}[Proof of Lemma~\ref{L:1}]
\label{proof:L:1}
By convexity of $f$ it follows that $f_{2,\theta_0}$ is convex and
thus its minimum is attained where $\nabla f_{2,\theta_0}(\theta) =
0$. From~\eqref{eq:f2} on page~\pageref{L:1} we derive
\begin{align*}
  \nabla f_{2,\theta_0}(\theta)
  &= \nabla \bigl[\nabla f(\theta_0)^T (\theta - \theta_0) \bigr]
  + \nabla \bigl[  \frac{1}{2} (\theta - \theta_0)^T \nabla^2 f(\theta_0) (\theta - \theta_0) \bigr]\\
  &= \nabla f(\theta_0) + \nabla^2 f(\theta_0)  (\theta - \theta_0) \;,
\end{align*}
which, when equated to $0$, yields
\begin{equation*}
  \hat{\theta} = \argmin_\theta f_{2,\theta_0}(\theta) =
  \theta_0 - \bigl[ \nabla^2 f(\theta_0) \bigr]^{-1} \nabla f(\theta_0) \;.
\end{equation*}
\end{proof}

\begin{proof}[Proof of Lemma~\ref{L:theta-diff}]
\label{proof:L:theta-diff}
By construction, $\nabla u_\epsilon(\hat\theta_{u_\epsilon})=0$ as $u_\epsilon$ assumes a minimum
at $\hat\theta_u$.
To prove~\eqref{E:0} from page~\pageref{L:theta-diff}, we first apply Lemma~\ref{L:1} to $u_\epsilon$ at
$\theta_0=\hat\theta$ leading to
\begin{equation}
\label{eq:theta_diff}
\hat\theta_u-\hat\theta=
  -
  \left[\nabla^2\left( f(\hat\theta)+\sum_t\epsilon_t L(t,\hat\theta)\right)\right]^{-1}
  \nabla \left[f(\hat\theta) + \sum_t\epsilon_t L(t,\hat\theta)\right]\;.
\end{equation}
Since $\nabla f(\hat\theta)=0$ and
$\nabla \bigl[ f(\hat\theta)+\sum_t\epsilon_t L(t,\hat\theta)\bigr] =
\sum_t \epsilon_t \nabla L(t,\hat\theta) $, for $\epsilon=0$ holds
\begin{equation}
  \label{eq:loss_lemma}
\frac{\partial}{\partial \epsilon_z}\hat\theta_{u_0} =
-\frac{\partial}{\partial\epsilon_z}\left[[\nabla^2 u_0(\hat\theta)]^{-1}\right]
\cdot \left[\sum_t\epsilon_t \nabla L(z,\hat\theta)\right]_{\epsilon=0}
-
[\nabla^2 u_0(\hat\theta)]^{-1} \cdot \nabla L(z,\hat\theta).
\end{equation}
Clearly, the first term on the right in~\eqref{eq:loss_lemma} vanishes
since its second factor is zero. For the second term we note that since $u_{0}(\theta) = f(\theta)$ and thus
have $\left[\nabla^2(u_{0})\right]^{-1}=
\left[\nabla^2 f\right]^{-1} =H_f^{-1} = H^{-1}$
which inserted into~\eqref{eq:loss_lemma} yields~\eqref{E:0}
\begin{equation*}
  \frac{\partial}{\partial\epsilon_z} \hat\theta_{u_{0}} =
- H^{-1} \nabla L(z,\hat\theta) \;.
\end{equation*}
\end{proof}

\begin{proof}[Proof of Lemma~\ref{L:basic}]
\label{proof:L:basic}
Applying Lemma~\ref{L:theta-diff} to $L(z',\hat\theta_{u})$ we derive
\begin{equation*}
  \frac{\partial}{\partial\epsilon_z}L(z',\hat\theta_{u_{0}})
   =  \left[ \nabla L(z',\hat\theta_{u_{0}}) \right]^T
  \frac{\partial}{\partial\epsilon_z}\hat\theta_{u_{0}}
 = -\nabla L(z',\hat\theta)^T  \left[\nabla^2 f(\hat\theta)\right]^{-1}\nabla L(z,\hat\theta) \;,
\end{equation*}
and since~$H$ and therefore $H^{-1}$ is symmetric,
\begin{eqnarray*}
\frac{\partial}{\partial\epsilon_z}L(z',\hat\theta_{u_{0}})
& = & -\Bigl[\nabla L(z',\hat\theta)^T  \left[\nabla^2 f(\hat\theta)\right]^{-1}\nabla L(z,\hat\theta) \Bigr]^T\\
& = & - \nabla L(z,\hat\theta)^T  \left[\nabla^2 f(\hat\theta)\right]^{-1}\nabla L(z',\hat\theta) \\
& = & -\left[\nabla L(z,\hat\theta_{u_{0}})\right]^T \frac{\partial}{\partial\epsilon_{z'}}\hat\theta_{u_{0}} \\
& = & \frac{\partial}{\partial\epsilon_{z'}}L(z,\hat\theta_{u_{0}}) \;,
\end{eqnarray*}
concluding the proof.
\end{proof}

\subsection{Proofs for main results}

\begin{proof}[Proof of Lemma~\ref{L:double-diff-theta}]
\label{proof:L:double-diff-theta}
  From the expression for $\hat\theta_u - \hat\theta$ from~\eqref{eq:theta_diff} and following the proof
  of Lemma~\ref{L:theta-diff} we employ Lemma~\ref{L:1} for $\theta_0=\hat\theta$ and $u(\theta)=f(\theta) +
  \sum_t\epsilon_t L(t,\theta)$ to obtain
  $ \hat\theta_{u}-\hat\theta=-
 \left[\nabla_\theta^2u\right]^{-1}
 \nabla_\theta u$. Since $\frac{\partial}{\partial \epsilon_z} \nabla u= \nabla L(z,\hat\theta)$ we derive
 \begin{eqnarray*}
 \frac{\partial}{\partial \epsilon_z}\hat\theta_{u} & = &  - \frac{\partial}{\partial\epsilon_z} \left[\nabla^2 u(\hat\theta)
   \right]^{-1} \cdot \nabla u  -
      [\nabla^2 u(\hat\theta)]^{-1} \cdot \nabla L(z,\hat\theta).
 \end{eqnarray*}
 Differentiating with respect to $\epsilon_x$ we have  $\frac{\partial}{\partial \epsilon_x}\nabla u=\nabla L(x,\hat\theta)$ and
\begin{eqnarray*}
  \frac{\partial^2}{\partial \epsilon_x \partial \epsilon_z}\hat\theta_u
  &=&
  -\frac{\partial^2}{\partial \epsilon_x \partial \epsilon_z}\left[ [\nabla^2 u(\hat\theta)]^{-1}\right]
  \cdot \nabla u
  -\frac{\partial}{\partial\epsilon_z}\left[ \nabla^2 u(\hat\theta)\right]^{-1}
  \cdot {\nabla L(x,\hat\theta)}  \\
 & &
  - \frac{\partial }{\partial \epsilon_x}\left[\nabla^2 u(\hat\theta)\right]^{-1} \cdot \nabla L(z,\hat\theta)
\end{eqnarray*}
Setting $\epsilon=0$ we have $\nabla u_0=0$ and accordingly
\begin{eqnarray*}
  \frac{\partial^2}{\partial \epsilon_x \partial \epsilon_z}\hat\theta_{u_0} &=& -\left[\frac{\partial}{\partial\epsilon_z}\left[
  \nabla_\theta^2 u_{  0}(\hat\theta)\right]^{-1} \cdot
\nabla L(x,\hat\theta)
+
\frac{\partial }{\partial \epsilon_x}\left[\nabla^2 u_{  0}(\hat\theta)\right]^{-1} \cdot \nabla L(z,\hat\theta)\right].
\end{eqnarray*}
Using the identity
$\frac{\partial}{\partial\epsilon_z}(A^{-1})=-A^{-1} \cdot
\frac{\partial}{\partial\epsilon_z}(A)\cdot A^{-1}$, and setting
$H = \nabla^2(u_{ 0}(\hat\theta))$ we obtain
\begin{align*}
  \frac{\partial}{\partial\epsilon_z} \left[\nabla^2 u_{  0} (\hat\theta)\right]^{-1}
  &=
  -H^{-1} \cdot \frac{\partial}{\partial\epsilon_z} \left[\nabla^2 f(\hat\theta) + \sum_t \epsilon_t \nabla^2 L(t,\hat\theta)
    \right] \cdot H^{-1} \\
  &=  -H^{-1}\cdot \left[\nabla^2 L(z,\hat\theta)\right]\cdot H^{-1}\;,\text{\quad and}\\
  \frac{\partial}{\partial\epsilon_x}\left[\nabla^2 u_{  0}(\hat\theta)\right]^{-1}
  &=  - H^{-1}\cdot \frac{\partial}{\partial\epsilon_x}\left[
        \nabla^2 f(\hat\theta) + \sum_t \epsilon_t \nabla^2 L(t,\hat\theta) \right] \cdot H^{-1} \\
  &= - H^{-1} \cdot \left[\nabla^2 L(x,\hat\theta)\right] \cdot H^{-1} \;,
\end{align*}
from which the proof follows.
\end{proof}

\begin{proof}[Proof of Lemma~\ref{L:double-diff}]
\label{proof:L:double-diff}
Using Lemma~\ref{L:theta-diff} we compute
\begin{eqnarray*}
  \frac{\partial }{\partial \epsilon_x}\frac{\partial}{\partial \epsilon_z}L(z',\hat\theta_u)
  & = & \frac{\partial}{\partial \epsilon_x}  \left[\left( \nabla L(z',\hat\theta_u) \right)^T  \frac{\partial}{\partial \epsilon_z} \hat\theta_u \right]\\
  & = & \left[ \frac{\partial}{\partial \epsilon_x}  \nabla L(z',\hat\theta_u) \right]^T \frac{\partial}{\partial \epsilon_z} \hat\theta_u  +
  \left(\nabla L(z',\hat\theta_u)\right)^T  \frac{\partial}{\partial \epsilon_x}\left[\frac{\partial}{\partial
      \epsilon_z} \hat\theta_u\right] \;,
\end{eqnarray*}
and, employing Lemmata~\ref{L:basic} and~\ref{L:double-diff-theta}, we obtain at $\epsilon=0$
\begin{eqnarray*}
\frac{\partial }{\partial \epsilon_x}\frac{\partial}{\partial \epsilon_z}L(z',\hat\theta_{u_{0}})
& = &
\left[ \frac{\partial}{\partial \epsilon_x}  \nabla L(z',\hat\theta_u) \right]^T \frac{\partial}{\partial \epsilon_z} \hat\theta_u  + \\
& &  \left(\nabla L(z',\hat\theta)\right)^T
\left[H_{u_{0}}^{-1} \left[\nabla^2 L(z,\hat\theta)\right] H_{u_{0}}^{-1}  \cdot \nabla L(x,\hat\theta)\right. \\
  & & \qquad \qquad \quad \ \  \left. + H_{u_{0}}^{-1} \left[\nabla^2 L(x,\hat\theta)\right] H_{u_{0}}^{-1}  \cdot
  \nabla L(z,\hat\theta)\right].
\end{eqnarray*}
Since
$
\frac{\partial }{\partial \epsilon_x}\frac{\partial}{\partial \epsilon_z}L(z',\hat\theta_u)
=\frac{\partial }{\partial \epsilon_z}\frac{\partial}{\partial \epsilon_x}L(z',\hat\theta_u)
$
we derive
 \begin{eqnarray*}
\left[ \frac{\partial}{\partial \epsilon_x}  \nabla L(z',\hat\theta_u) \right]^T \frac{\partial}{\partial \epsilon_z} \hat\theta_u  +
  \left(\nabla L(z',\hat\theta_u)\right)^T  \frac{\partial}{\partial \epsilon_x}\left[\frac{\partial}{\partial
      \epsilon_z} \hat\theta_u\right] & = & \\
  \left[ \frac{\partial}{\partial \epsilon_z}  \nabla L(z',\hat\theta_u) \right]^T \frac{\partial}{\partial \epsilon_x} \hat\theta_z  +
  \left(\nabla L(z',\hat\theta_u)\right)^T  \frac{\partial}{\partial \epsilon_z}\left[\frac{\partial}{\partial
      \epsilon_x} \hat\theta_u\right] & &
 \end{eqnarray*}
and using
$\frac{\partial}{\partial \epsilon_x}\left[\frac{\partial}{\partial \epsilon_z} \hat\theta_u\right]=
\frac{\partial}{\partial \epsilon_z}\left[\frac{\partial}{\partial \epsilon_x} \hat\theta_u\right]$ produces the equality
$$
\left[ \frac{\partial}{\partial \epsilon_x}  \nabla L(z',\hat\theta_u) \right]^T \frac{\partial}{\partial \epsilon_z} \hat\theta_u =
\left[ \frac{\partial}{\partial \epsilon_z}  \nabla L(z',\hat\theta_u) \right]^T \frac{\partial}{\partial \epsilon_x} \hat\theta_u.
$$
By Lemma~\ref{L:theta-diff} holds
$$
\left[ \frac{\partial}{\partial \epsilon_z}  \nabla L(z',\hat\theta_u) \right]^T \frac{\partial}{\partial \epsilon_x} \hat\theta_u =
-\big\langle \frac{\partial}{\partial \epsilon_z}  \nabla L(z',\hat\theta_u), \nabla L(x,\hat\theta_u)\big\rangle
$$
and the lemma follows.
\end{proof}
\begin{proof}[Proof of Lemma~\ref{L:claim0}]
\label{proof:L:claim0}
 We consider $\sum_{y\in Z} \frac{\partial^2 }{\partial
   \epsilon_y\partial \epsilon_z}L(z',\hat\theta_u)$ and interpret the
 terms with the help of Lemma~\ref{L:double-diff}. The lemma shows
 that each of these terms is in fact the sum of three values of the
 bilinear form, $\langle w,v \rangle = w^T H^{-1} v$, such that $y$
 only appears in the second variable. Using bilinearity, $\sum_{y\in
   Z}\nabla^2 L(y,\hat\theta_{u_{0}})=n H$ and $\nabla
 f_Z(\hat\theta_{u_0})=0$, we obtain for the sum of the first terms
\begin{align*}
  -\sum_{y\in Z}\big\langle  \frac{\partial}{\partial \epsilon_z}  \nabla L(z',\hat\theta_{u_0}) , L(y,\hat\theta_{u_0})\big\rangle
  &=
\big\langle  \frac{\partial}{\partial \epsilon_z}  \nabla L(z',\hat\theta_{u_0}) ,\sum_{y\in Z}\nabla L(y,\hat\theta_{u_{0}})\big\rangle \\
&=
n  \big\langle \frac{\partial}{\partial \epsilon_z}  \nabla L(z',\hat\theta_{u_0}) ,\nabla f_Z(\hat\theta_{u_{0}})
\rangle =0 \; .
\end{align*}
The sum of the second terms is given by
\begin{align*}
  \sum_{y\in Z}\langle \nabla L(z',\hat\theta_{u_{0}}),  \nabla^2 L(y,\hat\theta_{u_{0}})H^{-1}\nabla L(z,\hat\theta_{u_{0}})\rangle  &=
  \langle \nabla L(z',\hat\theta_{u_{0}}),  \sum_{y\in Z}\nabla^2 L(y,\hat\theta_{u_{0}}) \left(H^{-1}\nabla L(z,\hat\theta_{u_{0}})\right)\rangle \;.
\end{align*}
Here $\nabla^2 L(y,\hat\theta_{u_{0}})$ is an $m\times m$ matrix and
since $m\times m$ matrices form an algebra we can consider the
$m\times m$ matrix $\sum_{y\in Z}\nabla^2 L(y,\hat\theta_{u_{0}})$
where $(\sum_iA_i)x=\sum_i (A_ix)$. Accordingly we derive
\begin{align*}
 \sum_{y\in Z}\langle \nabla L(z',\hat\theta_{u_{0}}),  \nabla^2 L(y,\hat\theta_{u_{0}})H^{-1}\nabla L(z,\hat\theta_{u_{0}})\rangle  &=
  \langle \nabla L(z',\hat\theta_{u_{0}}),  \left[\sum_{y\in Z}\nabla^2 L(y,\hat\theta_{u_{0}})\right]H^{-1}\nabla L(z,\hat\theta_{u_{0}})\rangle \\
 &=   n \langle \nabla L(z',\hat\theta_{u_{0}}) , \nabla L(z,\hat\theta_{u_{0}})\rangle\;, \\
\end{align*}
since $HH^{-1}$ is the identity matrix. Finally we use linearity of
$\nabla^2 L(z,\hat\theta_{u_{0}}) H^{-1}$ and conclude
\begin{align*}
  \sum_{y\in Z} \langle \nabla L(z',\hat\theta_{u_{0}}), \nabla^2 L(z,\hat\theta_{u_{0}}) H^{-1} \nabla L(y,\hat\theta_{u_{0}})\rangle
  &=
\langle \nabla L(z',\hat\theta_{u_{0}}), \nabla^2 L(z,\hat\theta_{u_{0}}) H^{-1}\left[\sum_{y\in Z} \nabla L(y,\hat\theta_{u_{0}})\right]\rangle\\
&=
 n  \langle \nabla L(z',\hat\theta_{u_{0}}), \nabla^2 L(z,\hat\theta_{u_{0}}) H^{-1} \nabla f_Z(\hat\theta_{u_{0}})\rangle =0 \;.
\end{align*}
Accordingly we have
$$
\sum_{y\in Z}    \frac{\partial^2} {\partial \epsilon_y\partial \epsilon_z} L(z',\hat\theta_{u_{0}} )=
   n \frac{\partial}{\partial\epsilon_z} L(z',\hat\theta_{u_{0}}) \;,
$$
and Lemma~\ref{L:claim0} follows.
\end{proof}

\begin{proof}[Proof of Lemma~\ref{L:claim1}]
\label{proof:L:claim1}
  To establish eq.~(\ref{E:error}), instead of summing over all $y\in Z$, we introduce the term $O(n^{-2})$ and
  single out the two terms $y=z$ and $y=z'$
\begin{eqnarray*}
  \sum_{y\neq z,z'}    \left[\frac{\partial^2} {\partial \epsilon_y\partial \epsilon_z} L(z',\hat\theta_{u_{0}} ) \frac{1}{n}+O(n^{-2})\right]
  +  \left[ \frac{\partial^2} {\partial \epsilon_{z'}\partial \epsilon_z} L(z',\hat\theta_{u_{0}} )
  + \frac{\partial^2} {\partial \epsilon_{z}\partial \epsilon_z} L(z',\hat\theta_{u_{0}} )\right] \frac{1}{n}+O(n^{-2})
  = \\
\langle \nabla L(z',\hat\theta) ,\nabla L(z,\hat\theta_{u_{0}})\rangle +O(n^{-1}),
\end{eqnarray*}
since $\sum_{y\in Z}O(n^{-2})=O(n^{-1})$. We immediately observe
$$
 \left[ \frac{\partial^2} {\partial \epsilon_{z'}\partial \epsilon_z} L(z',\hat\theta_{u_{0}} )
  + \frac{\partial^2} {\partial \epsilon_{z}\partial \epsilon_z} L(z',\hat\theta_{u_{0}} )\right] \frac{1}{n}+O(n^{-2}) =O(n^{-1}) \;,
$$
and arrive at
\begin{align*}
   \sum_{y\neq z,z'}    \left[\frac{\partial^2} {\partial \epsilon_y\partial \epsilon_z} L(z',\hat\theta_{u_{0}} ) \frac{1}{n}+O(n^{-2})\right] &=
   \langle \nabla L(z',\hat\theta_{u_0}),  \nabla L(z,\hat\theta_{u_{0}})\rangle   +O(n^{-1}) \\
   &= \frac{\partial}{\partial\epsilon_z}L(z',\hat\theta_{u_{0}}) +O(n^{-1}) \;,
\end{align*}
whence Lemma~\ref{L:claim1}.
\end{proof}

\begin{proof}[Proof of Theorem~\ref{T:manyk}]
\label{proof:T:manyk}
We view
$\left[\frac{\partial}{\partial\epsilon_z}L(z',\hat\theta_{u})\right]^k$
as a function of $\epsilon = (\epsilon_t)_t$, form its Taylor
expansion at $(\epsilon_t)_t=0$, and subsequently evaluate at
$\epsilon = \eta_M$. This expansion involves terms of partial
derivatives $\frac{\partial}{\partial y}$ for $y\in M$ and $s\le n_0$
guarantess that the number of these terms does not scale with
$n$. Consequently we obtain
\begin{equation}\label{E:local44}
 \left[\frac{\partial}{\partial\epsilon_z}L(z',\hat\theta_{u_{  \eta_M}})\right]^k -\left[\frac{\partial}{\partial\epsilon_z}
   L(z',\hat\theta_{u_{  0}}) \right]^k = \sum_{y\in M} \left[\frac{\partial}{\partial \epsilon_y}
 \left[\frac{\partial}{\partial\epsilon_z}L(z',\hat\theta_{u_{  0}} )\right]^k
 \cdot
\left(-\frac{1}{n}\right)  + O(n^{-2})\right]
\end{equation}
Here $z',z$ are selected as distinct elements of $Z$ and $M\subset
Z\setminus \{z',z\}$ and the error term $O(n^{-2})$ is welldefined,
since $s\le n_0$ and $n$ is sufficiently large. We next consider pairs
$(M,\{z',z\})$ where $z,z'\not\in M$. A fixed $\{z',z\}$ appears in
such pairs with multiplicity $\binom{n-2}{s}$.  Consequently summing
over all pairs $(M,\{z',z\})$ and normalizing by
$\rho_s=\frac{1}{\binom{n}{2}\binom{n-2}{s}}$ we derive
$$
\frac{1}{\binom{n-2}{s}} \frac{1}{\binom{n}{2}}
\sum_{(M,\{z',z\}) \atop \vert M\vert=s} \left[\frac{\partial}{\partial\epsilon_z} L(z',\hat\theta_{u_{  0}}) \right]^k =\mathbb{E}[X_{u_0}^k].
$$
Note that $\binom{n}{2}\binom{n-2}{s}=\binom{n-s}{2}\binom{n}{s}$,
where for fixed $M$, the coefficient $\binom{n-s}{2}$ counts how many
pairs $\{z,z'\}$ not contained in $M$ can be selected. Therefore we
can conclude
$$
\frac{1}{\binom{n-s}{2}} \sum_{\{z,z'\}\cap M=\varnothing}  \left[\frac{\partial}{\partial\epsilon_z}
    L(z',\hat\theta_{u_{ \eta_M}})\right]^k = \mathbb{E}[X_{u_{\eta_M}}^k],
$$
with associated loss function $f_{Z\setminus
  M}(\theta)=\frac{1}{n-s}\sum_{t\in Z\setminus M} L(t,\theta)$, and
\begin{eqnarray*}
\frac{1}{\binom{n}{s}} \frac{1}{\binom{n-s}{2}}  \sum_{(M,\{z',z\}) \atop \vert M\vert =s}  \left[\frac{\partial}{\partial\epsilon_z}
  L(z',\hat\theta_{u_{ \eta_M}})\right]^k & = & \frac{1}{\binom{n}{s}} \sum_{M\subset  Z\atop \vert M\vert =s}
\left[\frac{1}{\binom{n-s}{2}} \sum_{\{z,z'\}\cap M=\varnothing}  \left[\frac{\partial}{\partial\epsilon_z}
    L(z',\hat\theta_{u_{ \eta_M}})\right]^k\right] \\
& = & \frac{1}{\binom{n}{s}}\sum_{M\subset  Z\atop \vert M\vert =s}  \mathbb{E}[X_{u_{\eta_M}}^k].
\end{eqnarray*}
Since $f_{Z\setminus M}$ assumes a minimum at $\hat\theta_{u_{\eta_M}}$ we derive from~\eqref{E:local44}
\begin{align*}
  \Delta_{s}(\mathbb{E}[X_{u_0}^k])
  &=
  \rho_s \sum_{(M,\{z',z\})} \left[\frac{\partial}{\partial\epsilon_z} L(z',\hat\theta_{u_{0}}) \right]^k -
  \left[\frac{\partial}{\partial\epsilon_z}L(z',\hat\theta_{u_{\eta_M}})\right]^k \\
  &= \rho_s\sum_{(M,\{z',z\})} \left[ \sum_{y\in M} \left[\frac{\partial}{\partial \epsilon_y}
    \left[\left[\frac{\partial}{\partial \epsilon_z} L(z',\hat\theta_{u_{0}} )\right]^k \right]
    \cdot \left(\frac{1}{n}\right)  + O(n^{-2})\right]\right]\\
  &=   k \rho_s\sum_{(M,\{z',z\})} \left[\left[ \frac{\partial}{\partial\epsilon_z}
      L(z',\hat\theta_{u_{0}})\right]^{k-1}  \sum_{y\in M}\left[
      \frac{\partial^2}
       {\partial \epsilon_y\partial \epsilon_z} L(z',\hat\theta_{u_{0}} ) \cdot
    \left(\frac{1}{n}\right)  + O(n^{-2})\right  ]\right]\\
&= k \rho_s \sum_{\{z,z'\}} \left[\left[ \frac{\partial}{\partial\epsilon_z}
    L(z',\hat\theta_{u_{0}}) \right]^{k-1} \left[\sum_{M} \sum_{y\in M}\left[\frac{\partial^2}
       {\partial \epsilon_y\partial \epsilon_z} L(z',\hat\theta_{u_{0}} ) \cdot
    \left(\frac{1}{n}\right)  + O(n^{-2})\right]\right ]\right]  .
\end{align*}
Any given $y\in Z\setminus \{z,z'\}$ appears in pairs $(M,\{z,z'\})$
with multiplicity $\binom{n-3}{s-1}$ and consequently
\begin{equation}\label{E:reorganize}
\sum_{M} \sum_{y\in M}\left[\frac{\partial^2}
       {\partial \epsilon_y\partial \epsilon_z} L(z',\hat\theta_{u_{0}} ) \cdot
       \left(\frac{1}{n}\right)  + O(n^{-2})\right] =
       \binom{n-3}{s-1}
       \sum_{y\in Z\atop y\neq z,z'}   \left[ \frac{\partial^2} {\partial \epsilon_y\partial \epsilon_z} L(z',\hat\theta_{u_{0}} )
         \left(\frac{1}{n}\right)+O(n^{-2})\right]
\end{equation}
Lemma~\ref{L:claim1} allows us to conclude
\begin{align*}
k \rho_s \sum_{\{z,z'\}, z\neq z'\atop z,z'\in Z}& \left[ \left[\frac{\partial}{\partial\epsilon_z}
  L(z',\hat\theta_{u_{0}}) \right]^{k-1} \left[
    \binom{n-3}{s-1} \sum_{y\in Z \setminus \{z,z'\}} \left[\frac{\partial^2}
       {\partial \epsilon_y\partial \epsilon_z} L(z',\hat\theta_{u_{0}} ) \cdot
       \left(\frac{1}{n}\right)  + O(n^{-2})\right]\right ]\right] \\
&=
k\rho_s \sum_{\{z,z'\}, z\neq z'\atop z,z'\in Z}  \left[ \left[\frac{\partial}{\partial\epsilon_z}
  L(z',\hat\theta_{u_{0}}) \right]^{k-1} \left[
    \binom{n-3}{s-1}   \left[\frac{\partial}{\partial\epsilon_z}L(z',\hat\theta_{u_{0}}) +O(n^{-1})   \right]\right]\right] \\
&=
k \frac{\binom{n-3}{s-1}}{\binom{n}{2}\binom{n-2}{s}}\sum_{\{z,z'\}, z\neq z'\atop z,z'\in Z}  \left[\left[ \frac{\partial}{\partial\epsilon_z}
  L(z',\hat\theta_{u_{0}})  \right]^k +O(n^{-1})\right] \;.
\end{align*}
Since  $\frac{\binom{n-3}{s-1}}{\binom{n-2}{s}}=\frac{s}{n-2}$, $\rho_s^{-1}=\binom{n}{2}\binom{n-2}{s}$,
and
$$
\frac{1}{\binom{n}{2}}\sum_{\{z,z'\}, z\neq z'\atop z,z'\in Z}
\left[ \frac{\partial}{\partial\epsilon_z} L(z',\hat\theta_{u_{0}})  \right]^k
= \mathbb{E}[X_{u_0}^k],
$$
we arrive at
\begin{align*}
   \Delta_s(\mathbb{E}[X_{u_0}^k] &=
  \mathbb{E}[X_{u_0}^k] - \frac{1}{\binom{n}{s}} \sum_{M\subset  Z\atop \vert M\vert =s}
  \mathbb{E}[X_{u_{\eta_M}}^k]  =   \frac{k s}{n-2}  \mathbb{E}[X_{u_0}^k]   +   \frac{k s}{n-2}  O(n^{-1+2-2})\;,
\end{align*}
completing the proof of the theorem.
\end{proof}

\begin{proof}[Proof of Theorem~\ref{T:variance}]
\label{proof:T:variance}
  We denote $X=X_{u_0}$ and $X_M=X_{u_{\eta_M}}$ and compute using Theorem~\ref{T:manyk} for $k=2$
\begin{eqnarray*}
\mathbb{V}[X]  - \frac{1}{\binom{n}{s}} \sum_{M\subset  Z\atop \vert M\vert = s} \mathbb{V}[X_M] & = &
\left[ \mathbb{E}[X^2]-\mathbb{E}[X]^2\right]  -
\frac{1}{\binom{n}{s}} \sum_{M\subset  Z\atop \vert M\vert = s}
\left[ \mathbb{E}[X_M^2]-\mathbb{E}[X_M]^2\right] \\
& = & \left[\mathbb{E}[X^2] - \frac{1}{\binom{n}{s}} \sum_{M\subset  Z\atop \vert M\vert = s} \mathbb{E}[X_M^2]\right]
-\left[\mathbb{E}[X]^2 -\frac{1}{\binom{n}{s}} \sum_{M\subset  Z\atop \vert M\vert = s} \mathbb{E}[X_M]^2\right] \\
& = &
\left[\frac{2s}{n-2}\mathbb{E}[X^2]  +O(n^{-2}) \right] -
\left[\mathbb{E}[X]^2 -\frac{1}{\binom{n}{s}} \sum_{M\subset  Z\atop \vert M\vert = s} \mathbb{E}[X_M]^2\right]
\end{eqnarray*}
It remains to compute the second term: we claim that for any $M$ holds $\mathbb{E}[X_M]=\mathbb{E}[X]=O(n^{-1})$.
This follows from $\nabla f_{Z}(\hat\theta_{u_0})=\nabla f_{Z\setminus M}(\hat\theta_{u_{\eta_M}})=0$,
and Lemma~\ref{L:basic}. For $\hat\theta = \hat\theta_{u_{\eta_M}}$, $\vert M\vert =s$ we have
$$
\sum_{z,z'\in Z\setminus M}\frac{\partial }{\partial\epsilon_z}L(z',\hat\theta)=
\sum_{z'\in Z\setminus M}\nabla L(z',\hat\theta)^T  H^{-1}\left(\sum_{z\in Z\setminus M}\nabla L(z,\hat\theta)\right)= 0.
$$
and setting $M=\varnothing$, $s=0$ we obtain an analogous expression for $X_{u_0}$.
This in turn implies $\mathbb{E}[X_{u_0}]=\mathbb{E}[X_{u_{\eta_M}}]=O(n^{-1})$ and consequently,
\begin{eqnarray*}
\mathbb{E}[X]^2 -\frac{1}{\binom{n}{s}} \sum_{M\subset  Z\atop \vert M\vert = s} \mathbb{E}[X_M]^2
 & = &O(n^{-2}),
\end{eqnarray*}
establishing
\begin{eqnarray*}
  \mathbb{V}[X]  - \frac{1}{\binom{n}{s}} \sum_{M\subset  Z\atop \vert M\vert = s} \mathbb{V}[X_M] & = &
  \left[\frac{2s}{n-2}\mathbb{E}[X^2]  +O(n^{-2}) \right]  + O(n^{-2})\\
  & = & \frac{2s}{n-2}\left(\mathbb{E}[X^2] -\mathbb{E}[X]^2\right) + O(n^{-2}) \;.
  \end{eqnarray*}
\end{proof}

\begin{proof}[Proof of Corollary~\ref{C:1}]
\label{proof:C:1}
We prove the corollary by contradiction, suppose~\eqref{E:exists} does
not hold. Then for any $M\subset Z$ with $\vert M\vert=s$ holds
 $$
 \frac{ 2 s}{n-2} \mathbb{E}[X^k_{u_0}] + O(n^{-2}) >
 \mathbb{E}[X^k_{u_0}]  - \mathbb{E}[X^k_{u_{\eta_M}}] .
 $$
Summing over all such $M\subset Z$ and normalizing by
$\frac{1}{\binom{n}{s}}$ has no effect on any terms that are
independent of $M$, whence we arrive at
  $$
  \frac{ 2 s}{n-2}  \mathbb{E}[X^k_{u_0}]  + O(n^{-2})
  >
  \mathbb{E}[X^k_{u_0}] - \frac{1}{\binom{n}{s}} \sum_{Z\subset M\atop \vert Z\vert=s} \mathbb{E}[X^k_{u_{\eta_M}}]
  $$
which is in view of Theorem~\ref{T:manyk}, impossible.
\end{proof}


\section*{Data sets and source code}

The source code used for the experiments in this paper is available as
part of the Git repository:
\begin{equation*}
    \text{\url{https://github.com/FenixHuang667/DataPurification}}
\end{equation*}
Python v.3.12.3 was used with PyTorch v.2.9.0. The data displayed in
Figures~3--8 through Section~\ref{S:appl} is included in the Git
repository.


\section*{Acknowledgements}
Funding provided by the UVA National Security \& Data Institute,
Contracting Activity \# 2024-24070100001. We thank Phillip Potter for
discussions and feedback on the manuscript, we gratefully acknowledge
the help of Jingyi Gao for preparing the EMNIST data.

\bibliographystyle{plain}
\bibliography{main}
\end{document}